\documentclass{article}

\usepackage{microtype}
\usepackage{graphicx}
\usepackage{subcaption}
\usepackage{booktabs} 

\usepackage{hyperref}

\usepackage[preprint]{icml2026}

\usepackage{amsmath}
\usepackage{amssymb}
\usepackage{mathtools}
\usepackage{amsthm}
\usepackage{braket}

\usepackage[capitalize,noabbrev]{cleveref}

\theoremstyle{plain}
\newtheorem{theorem}{Theorem}
\newtheorem{lemma}[theorem]{Lemma}
\newtheorem{definition}{Definition}

\newtheorem{corollary}{Corollary}
\newtheorem{example}{Example}
\theoremstyle{remark}

\usepackage{multirow}
\usepackage[textsize=tiny]{todonotes}

\icmltitlerunning{Disentanglement by means of action-induced representations}

\begin{document}

\twocolumn[
  \icmltitle{Disentanglement by means of action-induced representations}



  \icmlsetsymbol{equal}{*}

  \begin{icmlauthorlist}
    \icmlauthor{Gorka Mu\~noz-Gil}{equal,ibk}
    \icmlauthor{Hendrik Poulsen Nautrup}{equal,ibk}
    \icmlauthor{Arunava Majumder}{ibk}    
    \icmlauthor{Paulin de Schoulepnikoff}{ibk}
    \icmlauthor{Florian Fürrutter}{ibk}
    \icmlauthor{Marius Krumm}{ibk}
    \icmlauthor{Hans J. Briegel}{ibk}
  \end{icmlauthorlist}

    \icmlaffiliation{ibk}{University of Innsbruck, Department of Theoretical Physics, Technikerstraße  21a, A-6020 Innsbruck, Austria}

  \icmlcorrespondingauthor{Gorka Mu\~noz-Gil}{gorka(dot)munoz-gil(at)uibk(dot)ac(dot)at}


  \vskip 0.3in
]



\printAffiliationsAndNotice{\icmlEqualContribution}  

\begin{abstract}
Learning interpretable representations with variational autoencoders (VAEs) is a major goal of representation learning.
The main challenge lies in obtaining disentangled representations, where each latent dimension corresponds to a distinct generative factor. This difficulty is fundamentally tied to the inability to perform nonlinear independent component analysis.
Here, we introduce the framework of action-induced representations (AIRs) which models representations of physical systems given experiments (or actions) that can be performed on them. We show that, in this framework, we can provably disentangle degrees of freedom w.r.t. their action dependence.
We further introduce a variational AIR architecture (VAIR) that can extract AIRs and therefore achieve provable disentanglement where standard VAEs fail.
Beyond state representation, VAIR also captures the action dependence of the underlying generative factors, directly linking experiments to the degrees of freedom they influence.
\end{abstract}

Creating compressed representations of high-dimensional observations has become essential across a range of machine learning tasks, from improving computational efficiency in downstream pipelines~\cite{rombach2022high, laskin2020curl}, to enabling interpretable approaches that aim to uncover structure within the input data.  
In this context, variational autoencoders (VAEs)~\cite{kingma2013auto} have emerged as a hallmark architecture for unsupervised representation learning.
VAEs aim to compress input data into a low-dimensional latent space, creating a minimal representation of it. In cases where a dataset is characterized by few hidden variables, also known as factors of variation, VAEs are expected to extract these by encoding them in their latent space~\cite{higgins2017beta}. In physics, this translates to identifying the key features of a system, typically associated with its effective degrees of freedom~\cite{iten2020discovering, kalinin2021exploring, nautrup2022operationally, valleti2024physics, moller2025learning} or order parameters \cite{wetzel2017unsupervised, de2025interpretable}.

However, several challenges remain. One that has attracted significant attention, due to its foundational importance, is the disentangled nature of the learned representations~\cite{locatello2019challenging}. This refers to a representation in which each neuron in the VAE’s latent space encodes a single factor of variation, in contrast to an entangled representation, where the factors are mixed within the latent neurons. Indeed, such a problem is closely related to independent component analysis (ICA)~\cite{hyvarinen2016unsupervised}, which aims to recover the underlying components from observed data. This connection has revealed that, under typical conditions where the mapping from factors of variation to observations is nonlinear, recovering the true generative factors becomes fundamentally ill-posed, due to the undecidable nature of nonlinear ICA~\cite{khemakhem2020variational}. From a physics standpoint, this implies that the VAE may not uncover the system’s actual degrees of freedom, but instead learn entangled combinations of them, thus hindering the effective interpretability of the model. Indeed, substantial efforts have been made to address the issue, and numerous VAE variants have been proposed to mitigate it \cite{chen2018isolating, hsu2023disentanglement, kim2018disentangling}.

From a broader perspective, VAEs and their more advanced variants are typically trained on a fixed dataset of observations. This is analogous to analyzing a collection of experimental measurements. However, in scientific practice, the process by which data is acquired is often as important as the data itself. In experimental settings, one commonly probes a system by applying a perturbation, or action, and observing the system’s response. The nature of these actions thus encodes essential information that can provide deeper insight into the physical system under investigation. For example, certain actions may selectively couple to specific degrees of freedom, such as heating an object affecting its temperature, but not its mass. 

Interestingly, this perspective aligns with concepts from reinforcement learning (RL)~\cite{sutton1998reinforcement}, where an agent interacts with an environment by performing actions to achieve specific goals. When combined with representation learning techniques, such frameworks can be used to uncover factors of variation of the environment from observed interactions~\cite{thomas2018disentangling}, or to understand how different actions relate through their effects on the system~\cite{jain2021know}. Closely related is the field of causal representation learning~\cite{scholkopf2021toward}, which seeks to recover hidden causal variables through interventions. In such settings, actions are typically assumed to modify the hidden causal variables (or its value), and algorithms are designed to infer these transformations~\cite{lachapelle2024nonparametric}. In contrast, physical experiments often operate under a different paradigm: the intrinsic properties of the system, i.e., its hidden variables (or degrees of freedom), remain fixed. Actions or experimental interventions do not alter these properties, but instead yield different observations that reflect them (see \cref{fig:fig1}a).

In this work, we show that incorporating actions is key for learning disentangled representations, and we introduce a framework specifically designed to extract physical concepts in experimental settings. To this end, we develop a novel VAE variant, drawing inspiration from recent advances in representation learning within reinforcement learning and causal inference. The model is trained on pairs consisting of an action applied to the system and the resulting observation or measurement. The proposed architecture is built with two main objectives: first, to enforce the emergence of disentangled representations of the observed data; and second, to explicitly link each action to the specific degrees of freedom it affects. 

We begin by presenting in \cref{sec:theory} the theoretical analysis of the framework. We describe how and why disentanglement arises, and identify the conditions under which a set of actions induces a disentangled latent space. We then present in \cref{sec:architecture} a VAE architecture capable of producing such disentangled representations. Last, in \cref{sec:results} we numerically demonstrate the model's capabilities in three experiments: an abstract example designed to benchmark the capabilities of the model and show its advantage in disentanglement w.r.t. state-of-the-art VAE approaches, and two physics-inspired scenarios: one involving the recovery of the degrees of freedom of a classical particle from its trajectory, and another concerning the representation of small quantum systems from tomographic data. 

\section{Theory of action-induced representations}
\label{sec:theory}

\begin{figure*}
\centering
    \includegraphics[width=\textwidth]{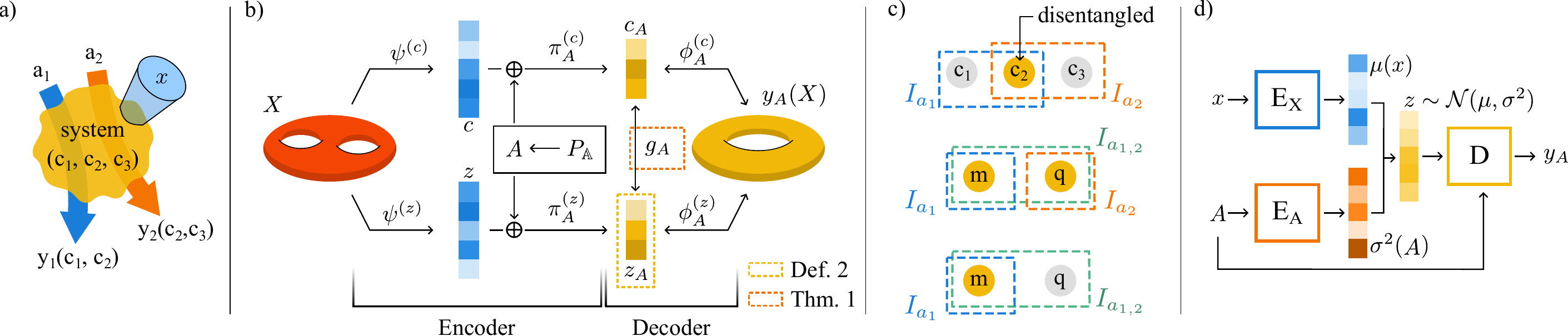}
    \caption{
    \textbf{a)} Schematic representation of the problem considered. A physical system,  uniquely described by its factors of variation $\mathbf{c}$, is observed in a certain state $x$. Some elementary actions $a_i\in\mathbb{A}$ are applied to the system, leading to outcomes $y_i(\mathbf{c})$ that may only depend on a subset of $\mathbf{c}$.
    \textbf{b)} Diagrammatic overview of the theoretical representation of the problem. See main text for details.
    \textbf{c)} Top scheme represents the factors of variation $\mathbf{c}$ of a), together with the subsets $I_A$ corresponding to each action ${a_i}$. Because $c_2$ is required for both actions, it is in the intersection $I_{a_1}\cap I_{a_2}$ and by \cref{thm:disentanglement} will be disentangled in any minAIR (yellow color).
    The two lower schemes represent the same picture, here for \cref{example:thm} (main text), and for two different action combination sets $P_\mathbb{A} = \{\{1\}, \{2\}, \{1,2\} \}$ and $P_\mathbb{A} = \{\{1\}, \{1,2\} \}$, respectively.
    \textbf{d)} VAIR: two encoders ($E_X$ and $E_A$) compute the latent neurons' mean and variance from an input sample $x$ and action combination $A$ (or elementary action $a$), respectively. A latent sample $z$ and the $A$ are then fed into the decoder $D$ to compute the output $y_{A}$.
    \vspace{-0.2cm}
    }
    \label{fig:fig1}
\end{figure*}

We consider observations of a physical system $x$, assumed to lie in a continuous input data manifold $X \subseteq \mathbb{R}^{d_x}$ and sampled from a probability distribution $p_{\text{data}}$. We further consider that the input manifold $X$ has a much lower dimension than $d_x$, a widely-accepted tenet in computer science referred to as the Manifold Hypothesis~\cite{whiteley2025statistical}. Its coordinates are typically referred to as \emph{factors of variation} $c\in C$ and uniquely describe any given data point $x\in X$. 
The goal of our proposal is to obtain a disentangled representation of these factors of variation without relying on independent component analysis. 

To do so, we take an operational approach and consider a finite set of integers $\mathbb{A} = \{1,2,\dots , n_A\}$ which label actions $a$ (also referred to as experiments or tasks in the following). Some of these actions can be implemented together, giving us a subset $P_\mathbb{A} \subseteq \mathcal P(\mathbb{A})$ specifying allowed actions and action combinations, where $\mathcal P(\mathbb{A})$ is the power set of $\mathbb{A}$. For each allowed element $A \in P_\mathbb{A}$ and for each input data point $x$, we consider that there is a unique, deterministic ground truth $y_{A} \in Y_{A} \subseteq \mathbb{R}^{d_Y}$ (see \cref{fig:fig1}a). Here, $Y_{A}$ is assumed to be a continuous manifold embedded in a raw output data space $\mathbb R^{d_Y}$, similarly as for the input data.

\begin{example}\label{example:data}
Consider an experiment $a=1$ measuring the gravitational force acting on a resting object, i.e. $y_1 := F_G = m g$ with $g$ the constant gravitational acceleration, which only depends on the mass $m$. An experiment $a=2$ measures the force exerted by a fixed, homogeneous electric field $E$, given by $y_2 := F_E = q E$ and therefore only depends on the charge $q$. Moreover, elementary actions $a=1,2$ can be combined into one experiment $A =\{1,2\}$, measuring first the gravitational force and then the electrical force. Then, the set of allowed action combinations is $P_\mathbb{A} = \{\{1\}, \{2\}, \{1,2\} \}$. Note that the form of the data $Y_A$ is not relevant as long as it is a continuous manifold. For instance, we could have also used the object's trajectory in the respective fields as the outcomes $y_i$. An implementation of a very similar scenario is presented in \cref{sec:classical}.
\end{example}

This setting gives rise to a collection of datasets as follows:

\begin{definition}\label{def:dataset}
    Let $X\subseteq \mathbb R^{d_x}$ be the continuous manifold of the \emph{set of input data points}, and $\mathbb{A} = \{1,2,\dots , n_\mathbb{A}\}$ the set of possible \emph{elementary actions}. Consider a set $P_\mathbb{A} \subseteq \mathcal P(\mathbb{A})$ specifying the \emph{allowed action combinations}. 

    For each allowed action combination $A \in P_\mathbb{A}$, let there be a deterministic ground truth $y_{A}: X \rightarrow Y_{A} \subset \mathbb R^{d_{Y_{A}}}, x \mapsto y_{A}(x)$ such that $Y_{A}$ is also a continuous manifold.

    We call $D_{A}:= \{(x,y_{A}(x))\ | \ \forall x\in X\}$ an \emph{action-induced dataset} for $A\in P_\mathbb{A}$.
\end{definition}

The notation for this section is summarized in \cref{fig:fig1}b. From the following, we see that an action-induced dataset has some inherent structure: in \cref{example:data}, to reproduce the data $y_{1}(x)$, we do not need to know the charge $q$, and we do not need to know $m$ to reproduce $y_2(x)$. This motivates the concept of action-induced representations, which may filter unused parameters for a given element of $P_\mathbb{A}$:

\begin{definition}
\label{def:air}
    Let $D_{A}\subset X \times y_{A}(X)$ be an action-induced dataset for $A\in P_\mathbb{A}\subseteq\mathcal{P}(\mathbb{A})$ where $P_\mathbb{A}$ is the set of allowed action combinations and $\mathbb{A}=\{1,2,...,n_\mathbb{A}\}$ is the set of elementary actions.

    Consider a continuous manifold $Z \subset \mathbb R^{d_Z}$, referred to as a \emph{latent space}. Let $I_{A}\subseteq \{1,...,d_Z\}$ with $A \in P_{\mathbb A}$ denote a subset of indices in $\mathbb R^{d_Z}$. Given such an index set, we introduce the projector $\pi_{A}: Z \rightarrow Z_{A} \subseteq \mathbb R^{|I_{A}|}$,
    \begin{align}\label{eq:proj}
        \pi_{A}(z) \equiv z_{A} := (z_i )_{i \in I_{A}}
    \end{align}
    where $Z_{A}:= \{(z_{i})_{ i \in I_{A}} | z\in Z\}$. %

    Let us assume that there exist continuous maps $\psi: X \rightarrow Z$ and $\phi_{A}: Z_{A} \rightarrow Y_{A}$ for all $A \in P_\mathbb{A}$ such that,
    \begin{align}
        (\phi_{A} \circ \pi_{A} \circ\psi)(x) = y_{A}(x) \:\: \forall x\in X.
    \end{align}
    
    Then, we refer to $\left( Z, \psi, (I_{A}, \phi_{A})_{A \in P_\mathbb{A}}\right)$ as an \emph{action-induced representation} (AIR) of $\{D_{A}\}_{A\in P_\mathbb{A}}$.
\end{definition}

\cref{fig:fig1}b, lower row, illustrates a mapping $\psi(x)$ that transforms an input $x$ into a latent representation $z$ (shown in blue). Given an action combination $A \in P_\mathbb{A}$, the corresponding AIR $z_A$ (shown in yellow) is obtained by applying the projection $\pi_A^{(z)}$ to $z$. As we will further elaborate in \cref{sec:architecture}, AIRs fit naturally into the framework of VAEs, where $\psi$ represents an encoder, $Z$ the image of the encoder, i.e., its latent space, and $\phi_{A}$ represents a decoder. 

As we have motivated above for the \cref{example:data}, AIRs can give rise to an operational form of disentanglement. However, AIRs themselves do not guarantee it, so one needs to further restrict to what we define as \emph{minimal} AIR:

\begin{definition}\label{def:minAIR}
    An AIR $\left( Z, \psi, (I_{A}, \phi_{A})_{A \in P_\mathbb{A}}\right)$ is a \emph{minimal AIR} (minAIR) if the following conditions are satisfied:
    \begin{enumerate}
        \item $\psi: X \rightarrow Z$ is surjective and $d_{Z} = |\bigcup_{A \in P_\mathbb{A}}I_{A}|$. All $\phi_{A}: Z_{A} \rightarrow Y_{A}$ are invertible and the inverses are continuous.
        
        \item $Z$ is open in $\mathbb R^{d_Z}$, and uses the (subset) topology of $\mathbb R^{d_Z}$
    \end{enumerate}
\end{definition}

In simple terms, condition (1) implies that minAIRs are a faithful parameterization of the $Y_A$, and (2) imposes that all $z$-entries are free parameters (without redundancies). A detailed motivation is discussed in \cref{app:minAir_motivation}. 

With this definition at hand, we can revisit the \cref{example:data} dataset and discuss some minAIRs:
\begin{example}\label{example:minAIR}
Consider the data from \cref{example:data}. An acceptable minAIR would be $z = (m,\, q)$, with $\phi_{1}^{(z)}(m) = mg$, $\phi_2^{(z)}(q) = qE$, $\phi^{(z)}_{1,2}(m, q) = (mg,\, qE)$. 
Under the assumption that $m > 0$, all of these functions are continuous with continuous inverses. 
Similarly, $c = (m^2, \, 2q)$ is a valid minAIR.
A non-example is $v=(m, m+q)$, which is not a minAIR because $Y_{2}$ is one-dimensional, but $Z_{2}$ is two-dimensional, and therefore $\phi_2$ cannot be invertible. However, if one changes the setup and considers a dataset that only includes $P_{\mathbb{A}}=\{\{1\},\{1,2\}\}$, then, $v$ becomes a minAIR  with $\phi_1^{(v)}=\phi_1^{(z)}$ and $\phi^{(v)}_{1,2}(m,m+q)=\phi^{(z)}_{1,2}\circ \varphi (m, m+q)$ where $\varphi (m, m+q)=(m,q)$ is invertible
\end{example}

\paragraph{Operational disentanglement in minAIRs}
We now show that minAIRs have a very specific structure that explicitly \emph{disentangles} parameters w.r.t. their action-dependence. This disentanglement is summarized by our main theorem:

\begin{theorem}\label{thm:disentanglement}
    Consider two minAIRs, $\left( Z, \psi^{(z)}, (I_{A}^{(z)}, \phi^{(z)}_{A})_{A \in P_\mathbb{A}}\right)$ with convex set $Z$ and $ \left( C, \psi^{(c)}, (I_{A}^{(c)}, \phi^{(c)}_{A})_{A \in P_\mathbb{A}}\right) $ with convex set $C$. For any nonempty set of action combinations $\mathcal{A}:=\{A_1,...,A_m\}\subseteq P_\mathbb{A}$, let $I_{\mathcal{A}}^{(z)}=\bigcap_{A\in\mathcal{A}}I^{(z)}_{A}$ (and analogously $I^{(c)}_{\mathcal{A}}$) be the index sets of latent vector components they have in common.

    Then, for any $A_j\in \mathcal{A}$, the map $g_{\mathcal{A}}:Z_{A_j}\rightarrow C_{\mathcal{A}}$,
    \begin{align}
        g_{\mathcal{A}} := \pi^{(c)}_{\mathcal{A}} \circ (\phi^{(c)}_{A_j})^{-1} \circ \phi^{(z)}_{A_j}
    \end{align}
     only depends on the overlap $z_\mathcal{A}$, but not on $j$ and not on the completion to $z_{A_j}$. It is bijective as a function of $z_\mathcal{A}\in Z_{\mathcal{A}}$ only. 
\end{theorem}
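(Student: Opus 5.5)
The plan is to compare the two minAIRs through a global transition map and read off the claimed structure from it.

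\textbf{Step 1: a global homeomorphism.} For each $A\in P_\mathbb{A}$ set
\[
h_A:=(\phi^{(c)}_{A})^{-1}\circ\phi^{(z)}_{A}:Z_A\to C_A .
\]
Since both minAIRs reproduce the same $y_A$ and $\psi^{(z)},\psi^{(c)}$ are surjective, we have $\phi^{(z)}_A(Z_A)=Y_A=\phi^{(c)}_A(C_A)$. Hence $h_A$ is a well-defined homeomorphism. It also satisfies
\[
h_A(\pi^{(z)}_A\psi^{(z)}(x))=\pi^{(c)}_A\psi^{(c)}(x)\quad\text{for all } x\in X.
\]
Next I define $H:Z\to C$ componentwise. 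For each $k\in\bigcup_A I^{(c)}_A=\{1,\dots,d_C\}$, pick any $A\ni k$ and put $H(z)_k:=h_A(z_A)_k$. To see that this does not depend on the choice of $A$, take $z=\psi^{(z)}(x)$ (possible by surjectivity). Then $h_A(z_A)_k=\psi^{(c)}(x)_k$ for every $A$ containing $k$. The same identity shows $H\circ\psi^{(z)}=\psi^{(c)}$, so $H$ is well defined even though $\psi^{(z)}$ need not be injective. Symmetrically one builds $G:C\to Z$ with $G\circ\psi^{(c)}=\psi^{(z)}$. Then $G\circ H=\mathrm{id}$ and $H\circ G=\mathrm{id}$ by surjectivity. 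Both maps are continuous, being assembled from the $h_A$. The key structural fact is that, by construction, $\pi^{(c)}_A\circ H=h_A\circ\pi^{(z)}_A$ for every $A$: the $C_A$-block of $H$ depends only on $z_A$.

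\textbf{Step 2: reduction to the overlap.} Now $g_\mathcal{A}=\pi^{(c)}_\mathcal{A}\circ h_{A_j}$. For $k\in I^{(c)}_\mathcal{A}$ we have $k\in I^{(c)}_{A_i}$ for every $i$. So by Step 1, $g_\mathcal{A}(z_{A_j})_k=H(z)_k$ for any $z$ completing $z_{A_j}$, and the same value arises from every $A_i$. This settles independence of $j$. The real content is that $H(z)_k$ depends only on $z_\mathcal{A}$. We know $H(z)_k$ is a function of $z_{A_i}$ for each $i$ separately. The task is to show that a function which, for each $i$, is constant under changes of the coordinates outside $I^{(z)}_{A_i}$ is constant under changes outside $\bigcap_i I^{(z)}_{A_i}$.

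\textbf{Step 3: convexity argument (main obstacle).} This is where convexity of $Z$ enters. Given $z,z'\in Z$ with $z_\mathcal{A}=z'_\mathcal{A}$, I plan to move from $z$ to $z'$ one coordinate block at a time. Partition the indices outside $I^{(z)}_\mathcal{A}$ by choosing, for each index $l$, some $A_i$ with $l\notin I^{(z)}_{A_i}$. Changing all the coordinates assigned to $A_i$ leaves $z_{A_i}$ fixed and hence leaves $H_k$ unchanged. The difficulty is that the intermediate points obtained by replacing coordinate blocks may leave $Z$, since $Z$ is convex but not a box. I will first try local moves: $Z$ is open and convex, so small coordinate-block moves near the segment $[z,z']$ stay in $Z$. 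This shows $H_k$ is locally constant along directions outside $I^{(z)}_\mathcal{A}$ within each slice $\{w\in Z: w_\mathcal{A}=z_\mathcal{A}\}$. That slice is convex, hence connected, so $H_k$ is constant on it. The same argument applied to $G$ gives the symmetric statement.

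\textbf{Step 4: bijectivity.} Write $g_\mathcal{A}$ as $\tilde g:Z_\mathcal{A}\to C_\mathcal{A}$, with the analogous $\tilde f:C_\mathcal{A}\to Z_\mathcal{A}$ built from $G$. Then $\tilde f\circ\tilde g=\mathrm{id}$ and $\tilde g\circ\tilde f=\mathrm{id}$, because $\pi_\mathcal{A}\circ G\circ H=\pi_\mathcal{A}$ and the $\mathcal{A}$-block of $G$ depends only on the $\mathcal{A}$-block of its argument. Together with surjectivity of the projections, this gives that $\tilde g$ is bijective.
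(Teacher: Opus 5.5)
Your proposal is correct and follows essentially the same route as the paper. Both arguments identify $c$ from $z$ through the shared outputs $y_A$ and note that $c_\mathcal{A}$ is a function of each $z_{A_i}$ separately. Both use openness to exchange non-overlap coordinates inside small boxes contained in $Z$, then convexity to make this local constancy global. Both obtain bijectivity by rerunning the argument with $z$ and $c$ swapped.

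The differences are cosmetic. You make the global map $H$ explicit, including its well-definedness via surjectivity of $\psi^{(z)}$, where the paper leaves it implicit. You also conclude via connectedness of the convex slice $\{w\in Z: w_\mathcal{A}=z_\mathcal{A}\}$, whereas the paper chains finitely many overlapping boxes along the segment $[z,\tilde z]$.
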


The proof of the Theorem and a more general version that does not assume full convexity of $Z$ and $C$ is given in \cref{app:proof_thm}.
This Theorem implies that the neurons that are shared for different action combinations (those in $I_\mathcal{A}$) will be disentangled from the rest for any minAIR (see \cref{fig:fig1}c). In other words, it shows that the $z_\mathcal{A}\in Z_{\mathcal{A}}$ and $c_\mathcal{A}\in C_{\mathcal{A}}$ uniquely determine each other via $g_{\mathcal A}$ through their mutual connection to $y_A \ \forall A\in P_\mathbb{A}$ (see \cref{fig:fig1}b).

\begin{example}\label{example:thm}
    Consider the dataset from \cref{example:data} and the minAIRs in \cref{example:minAIR}.
    Notably, for $P_{\mathbb{A}}=\{\{1\}, \{2\}, \{1,2\}\}$ all minAIRs disentangle $m$  and $q$ (\cref{fig:fig1}c, middle scheme), while for $P_{\mathbb{A}}=\{\{1\},\{1,2\}\}$, they only disentangle $m$, since $v$ is also a minAIR (\cref{fig:fig1}c, bottom scheme). The reduced disentanglement in the latter case is exactly a consequence of $m$ being shared by both action combinations $A_1=\{1\}$ and $A_{1,2}=\{1,2\}$ while $q$ alone is in no intersection.
\end{example}

\section{VAIR: a model for \MakeLowercase{min}AIR}
\label{sec:architecture}


Given the problem stated above, we introduce an architecture, called variational AIR (VAIR), designed to extract such minAIR and hence, for suitable sets of actions, disentangle the hidden factors of variation of the input data.

\paragraph*{Background on VAE.} The proposed model is based on the variational autoencoder (VAE)~\cite{kingma2013auto}, which consists of an encoder $p_\theta$ that compresses input samples $x$ into a latent representation $z$, and a decoder $q_\phi$ that reconstructs the original input from this latent space. In VAEs, latent variables are probabilistic neurons whose distribution is typically modeled by a multivariate Gaussian. In practice, the encoder outputs the mean $\mu_i$ and variance $\sigma_i$ of the latent variables from which the values for each latent neuron $z_i$ are sampled and then passed to the decoder. The model is trained via the Evidence Lower Bound (ELBO),
\begin{equation} 
\label{eq:loss_VAE}
    \mathcal{L} = \frac{1}{N}\sum_{n=1}^{N}\mathbb{E}_q[\log p_\theta(x^{(n)}\mid z)] - \beta \mathrm{KL}[q_\phi(z\mid x^{(n)})||p(z)]
\end{equation}
with $N$ being the number of samples $x^{(n)}$ in the training set and $\beta$ the regularization parameter introduced in ~\citet{higgins2017beta}. 
The loss function introduced in \cref{eq:loss_VAE} contains two terms. The first is a reconstruction loss, which enforces similarity between the predicted and target outputs. In standard VAEs, this corresponds to the likelihood $p_\theta(x^{(n)} \mid z)$ of reconstructing the input $x^{(n)}$ from the latent representation. 
The second term is a regularization loss, given by the Kullback-Leibler divergence between the approximate posterior $q_\phi(z_i \mid x^{(n)}) \sim \mathcal{N}(\mu_i, \sigma_i)$ and a standard normal prior $p(z) = \mathcal{N}(0,1)$. This term encourages the latent representation to align with the prior, while competing with the reconstruction objective.

This trade-off results in a polarized latent space~\cite{rolinek2019variational}, where only the minimal set of neurons required for reconstruction deviates from the prior (\emph{active} neurons), while the remaining ones collapse to it (\emph{passive} neurons). Additionally, an intermediate regime of \emph{mixed} neurons, active only for subsets of the data, has been identified~\cite{bonheme2023more}. 
\vspace{-0.05cm}

\paragraph*{VAIR: adapting VAE for AIR.} 

VAIR is a variant of VAE purposely built to produce minAIR and hence produce disentangled representations by \cref{thm:disentanglement}. The main change compared to standard VAE is the use of two separate encoders (see \cref{fig:fig1}d). The first, $E_X$, takes the observations $x$ and outputs only the mean $\mu_i$ of the latent neurons. 
The second, $E_A$, takes the action combination $A\in P_\mathbb{A}$ and outputs their variances $\sigma_i^2$, which determine which neurons are noised out for a given action. For brevity, we often use elementary actions $a\in \mathbb{A}$ and action combinations $A\subseteq P_\mathbb{A}$ interchangeably. 
Following the common definitions of the polarized regime, we call neurons active if $\sigma_i^2 \rightarrow 0$ for all actions. 
As defined in \cref{def:air}, for some AIRs not all latent neurons are required for every action $a$, and the regularization in \cref{eq:loss_VAE} pushes the model to noise out the unnecessary ones. Neurons that are active only for a subset of actions are referred to as \emph{mixed} neurons.

Finally, we feed the latent vector to the decoder $D$. Moreover, because the action cannot be inferred from the sampled latent vector, and is indeed needed to reconstruct $y_a$, we also input it to the $D$. We note here that compared to the latent model of \cref{eq:loss_VAE}, VAIR defines a likelihood $p_\theta(x^{(n)}|z,a^{(n)})$ and a latent posterior $q_\phi(z|x^{(n)},a^{(n)})$, which can be nonetheless optimized in the same fashion. 
Interestingly, related conditioning ideas appear in iVAE \cite{khemakhem2020variational}. There, identifiability relies on a context-dependent prior $p(z|a)$, while our approach instead uses action-dependent encoder and decoder components to induce disentanglement, without modifying the prior.

Given the previous discussion, we identify $E_X(x)$ with the mapping $\psi$ (\cref{sec:theory}), which transforms the input $x$ into a compressed latent representation $z$. Similarly, the encoder $E_A(a)$ corresponds to $\pi_{\{a\}}$, adapting the latent space to the specific input action. Finally, the decoder $D(z,a)$ plays the role of $\phi_{A=\{a\}}$, transforming the resulting latent vector into the output associated with that action. 
While this comparison is heuristic, we provide numerical evidence for the emergence of minAIR in VAIR in \cref{sec:results}.
We note that, while \cref{sec:theory} considered discrete actions, numerical experiments presented below demonstrate that minAIRs can also be achieved for continuous actions (see \cref{sec:quantum_exp}). 

\begin{figure*}
\centering
    \includegraphics[width=0.9\textwidth]{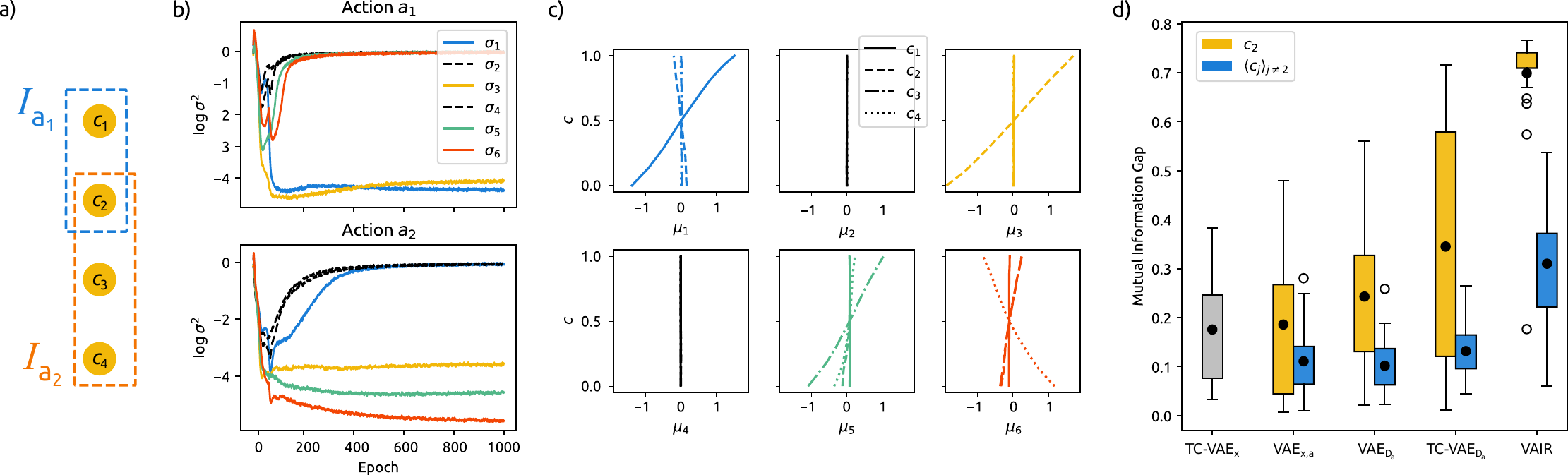}
    \caption{\textbf{Abstract experiment} 
    \textbf{a)} Factors of variation in the experiment, indicating the subsets $I_a$ associated with each of the considered actions.
    \textbf{b)} Latent neuron variances $\sigma_i$ output by $E_a$ for the two actions in the experiment, plotted as a function of training epoch.
    \textbf{c)} Latent neuron means $\mu_i$ output by $E_x$ as a function of the hidden factors $c_i$. Colors correspond to the neurons (as in panel b), while line styles indicate different factors.
    \textbf{d)} Mutual information gap (MIG) computed for two groups of factors: $c_2$, and all remaining factors excluding $c_2$. Boxplots show results from 20 independent training runs per model, each with random dataset generation and weight's initialization.
    \vspace{-0.3cm}
    } 
    \label{fig:abstract_exp}
\end{figure*}

\paragraph*{Why does VAIR approximate minAIRs?}
We now give a heuristic interpretation on why VAIR approximates minAIRs. In the polarized regime, latent neurons can be idealized to be either noise-free ($\sigma_i^2\rightarrow0$) or noised-out ($\sigma_i^2\rightarrow1$), depending on the choice of $A \in P_{\mathbb A}$. The noise-free ones correspond to our decoders $\phi_A \circ\pi_A  $ acting on the projected subspace. Since the training prefers to make as many neurons noisy as possible, the training will prefer a latent representation in which as few neurons as possible are noiseless. This facilitates latent representations in which all remaining latent neurons are helpful for some task, i.e. $d_Z = |\bigcup_{A \in P_{\mathbb A}} I_A|$. Furthermore, the cardinality of the $I_A$ should be minimal in the latent space, such that each noise-free latent neuron acts as its own free parameter, giving us openness. For the noise-free neurons, there is effectively no sampling anymore, such that the decoder $\phi_A$ should receive a deterministic latent input, which it maps to the unique ground truth $y_A$. Adding the minimality pressure acting on the latent space, it is reasonable to expect that the $\phi_A$ tend to be invertible while continuity is built-in in artificial neural networks.

\section{Numerical experiments}
Code for reproducing the results of this section can be found at \url{https://github.com/gorkamunoz/air}.
\label{sec:results}


\subsection{Abstract experiment}
\label{sec:abstract}

We start by considering an abstract problem that will allow us to illustrate the various theoretical concepts presented in \cref{sec:theory}. For that purpose, we define a simple dataset arising from four factors of variation $c_1, \dots,c_4$. An observation $x\in \mathbb{R}^{d_x}$ is then given by some fixed random functions $F$ (see \cref{app:abstract}). The outputs $y\in \mathbb{R}^{d_y}$ are also calculated from fixed random functions $G,G'$ as $y_{a_1}=G(c_1,c_2)$ and $y_{a_2}=G'(c_2,c_3,c_4)$ excluding action combinations. Hence, $\dim Z = 4$, $\dim Z_{a_1} = 2$ and $\dim Z_{a_2} = 3$ (see \cref{fig:abstract_exp}a).

We now set $d_x = d_y =10$ and train a VAIR with 6 latent neurons (see  \cref{app:training}). In \cref{fig:abstract_exp} we show the evolution of the latent neurons' variance predicted by $E_A$ for each of the two input actions. Already in the early stages of training, the model learns that only four neurons are required to reconstruct the output, setting $\log\sigma_2^2, \log \sigma_4^2\rightarrow 0$ for both actions, hence making them passive neurons. Then, the model encodes information needed for action $a_1$ in two neurons, namely 1 and 3, as shown by the variance decrease in the top panel, and in three neurons for action $a_2$. That is, we can see that the model retains four non-passive neurons, from which three (1,5,6) are mixed, as they are only active for either action 1 or 2, and only one is active (3). This is in accordance with expected results for $\dim Z = 4$, $\dim Z_{I_{a_1}} = 2$ and $\dim Z_{I_{a_2}} = 3$.

The fact that neuron 3 is an active neuron comes from it being in the intersection $I_{a_1} \bigcap I_{a_2}$. By \cref{thm:disentanglement}, and given that $c_2$ is at the intersection of the underlying experiments (\cref{fig:abstract_exp}a), any minAIR must disentangle such factor of variation. To show this, we plot in \cref{fig:abstract_exp}c the latent neurons' means as a function of the different factors' values. For each line, we fix all $c_{j\neq i}$, and change the value of $c_i$ monotonically in its range. As expected, due to $z_2$ and $z_4$ being passive, both means are constant at zero. Conversely, the rest show a clear dependence w.r.t. different $c_i$. In particular, $\mu_3$ is only dependent on $c_2$, demonstrating that this neuron encodes a disentangled representation. On the other hand, $\mu_1$ encodes mainly a representation of $c_1$, but also has a slight dependence on $c_2$. We recall that VAIR is not constructed to create a disentangled representation for factors outside any intersection, and that any other disentanglement mostly depends on the training and the inherent bias of VAEs to disentangle~\cite{bhowal2024variational}. Indeed, we see that neurons 5 and 6 encode an entangled representation of the factors needed to reconstruct $y_{a_2}$.

\paragraph{Comparison with other VAE variants}
We compare VAIR to four VAE variants: (i) TC-VAE$_x$, a $\beta$VAE trained without actions using the Total Correlation (TC) loss~\cite{chen2018isolating}; (ii) VAE$_{x,a}$, a $\beta$VAE receiving both state and action as input; (iii) VAE$_{D_a}$, which also feeds the action to the decoder, partially mirroring VAIR's structure; and (iv) TC-VAE$_{D_a}$, identical to the previous but trained with the TC-loss. See \cref{app:vae_bench} for details.  
While VAE$_{x,a}$ must encode the action in the latent space, violating the AIR assumptions, VAE$_{D_a}$ and TC-VAE$_{D_a}$ do not, and could in principle approximate minAIRs.

We then evaluate the learned representations using the mutual information gap (MIG) between the factors of variation $c_i$ and the mean of each latent neuron $\mu_k$~\cite{chen2018isolating}, with larger MIG values indicating more disentangled representations. In \cref{fig:abstract_exp}d, we plot the MIG scores for two subsets of neurons. The blue bars show the MIG corresponding to $c_2$, the factor expected to be disentangled from the experiment's construction. For TC-VAE$_x$, as no actions are considered, we average over all $c$, showcasing the inability of VAE to fully disentangle the dataset~\cite{locatello2019challenging}. On the other hand, as predicted by \cref{thm:disentanglement} and observed in \cref{fig:abstract_exp}c, \text{VAIR} achieves near-complete disentanglement of this factor. $\text{VAE}_{D_a}$ and $\text{TC-VAE}_{D_a}$ also shows improved disentanglement of $c_2$ compared to the standard VAE, although it still performs significantly worse than \text{VAIR}. This is due to their inability to dissociate the action from their encoded representation, which also further hinders the MIG when average over the rest of the variables $\left < c_j \right >_{j\neq 2}$ (see \cref{app:bench_migs} for further on this). Similarly, the scores drop substantially even for \text{VAIR}, as it is not the objective of the architecture to promote disentanglement of these components.

\begin{figure*}
\centering
    \includegraphics[width=\textwidth]{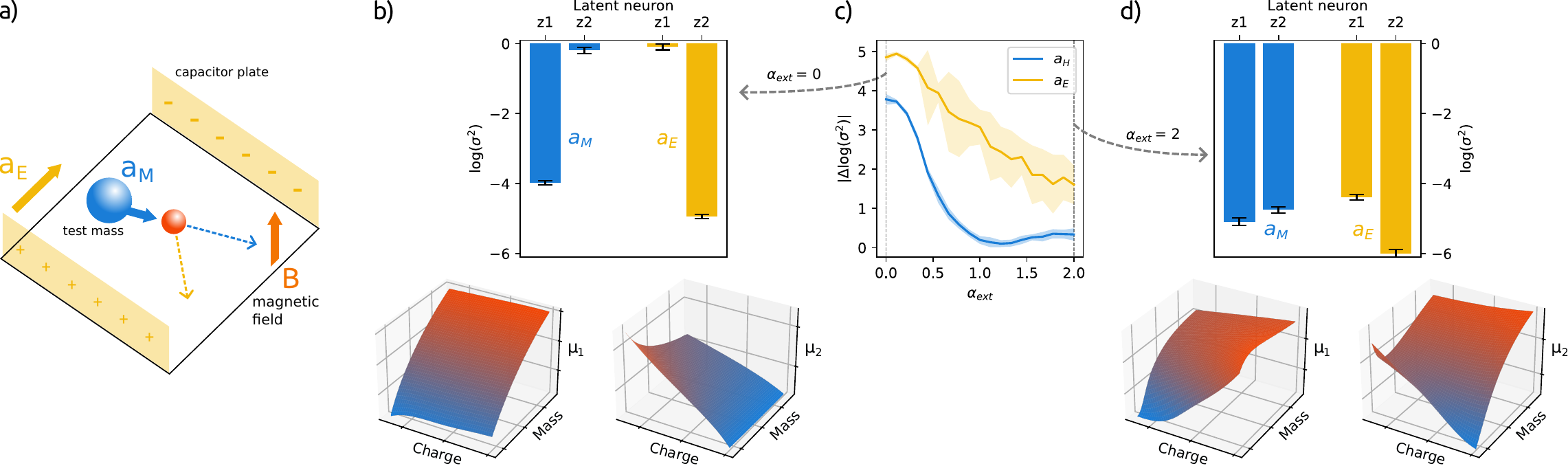}
    \caption{\textbf{Classical experiment} 
    \textbf{a)} Schematic representation of the experiment: we consider objects of certain mass and charge and two actions: an elastic collision with a larger object with fixed mass ($a_M$) \emph{or} the activation of an electric field via a capacitor ($a_E$). We also consider the additional presence of an external magnetic field $B$ of different coupling strengths $\alpha_{ext}>0$.
    \textbf{b)} The central panel shows the learned variances by $E_A$ for the two different actions at $\alpha_{\text{ext}} = 0$. The two lower plots showcase the relationship between each latent neuron's $\mu_i$, as a function of the input trajectories' mass and charge.
    \textbf{c)} Difference between the predicted variances by $E_A$ for each action, as a function of the strength $\alpha_{\text{ext}}$ of the external magnetic field $B$.
    \textbf{d)} Same as panel b but for $\alpha_{\text{ext}} = 2$.
    \vspace{-0.3cm}
    } 
    \label{fig:classical_exp}
\end{figure*}

\subsection{Classical physics experiment}
\label{sec:classical}

Next, we present a realistic experiment based on classical physics and closely related to the running example in \cref{sec:theory}. We will show the existence of non-trivial minAIRs that differ from our typical physical interpretation of a system and the effect of uncontrolled external sources hindering disentanglement by AIR.

We consider an object of mass $m$ and charge $q$ (see \cref{fig:classical_exp}a). The experiment consists of two actions: 1) the collision action, referred to as \emph{mass} action $a_M$, considers the elastic collision of the object with a fixed, chargeless test mass $M>m$ with velocity $v_H$; 2) the \emph{electric} action $a_E$ consists of the placement of a parallel plate capacitor that generates an electric field $E$. Instead of action combinations, we will also later consider the presence of an additional noise source in the form of a magnetic field $B$ with varying strengths (see in \cref{app:classical}). The goal of this experiment, and hence the output of VAIR, is to predict the two dimensional trajectory $y_{a_i}(t)$ as the result for a given action. From this setting, one can see that for $y_H(t)$ only the mass is relevant while for $y_E(t)$ one needs both the mass and charge.

For simplicity, we consider the observation to consist of the stacked trajectories resulting from the actions, $x = \{y_H(t), y_E(t)\}$. An alternative choice would have been, for example, a previous observation of the object’s dynamics. Crucially, in line with the problem formulation in \cref{sec:theory}, all factors of variation must be recoverable from such observations. If one were to train on a single trajectory at a time instead, the encoder would be able to infer which action was applied. As a result, \text{VAIR} may construct representations that differ from \text{minAIRs}.

We now train on such a dataset a VAIR with two latent neurons, for simplicity and due to the expected $\dim Z = 2$. As shown by the final learned variances presented in \cref{fig:classical_exp}b, the two neurons are mixed, each remaining active for one action but not the other. While this was expected for the mass action, as a single factor ($m$) is needed for $y_H(t)$, we expected both $m$ and $q$ to be required to predict $y_E(t)$. By inspecting the learned representations (side panels in \cref{fig:classical_exp}b), we see that while $\mu_1$ encodes $m$ and is independent of $q$,  $\mu_2$ encodes a function of both $m$ and $q$. In particular, it encodes a representation proportional to $q/m$, the factor defining the trajectory of a massive, charged object in an electric field (see \cref{app:classical}). Indeed, the learned representation $z=(m,q/m)$ is a minAIR, and the expected $c=(m,q)$ is not, as $Z_{I^{(z)}_{a_E}} = 1 < Z_{I^{(c)}_{a_E}} = 2$. This highlights the importance of actions and their effect on a physical system for our physical understanding. In the current experiment, due to the nature of the set of actions, it is not possible to isolate $q$, and hence one must search for a better experimental setup that would fulfill this goal. We further discuss this in the conclusion.

\paragraph*{External forces} In many experimental scenarios, the presence of external, uncontrollable perturbations hinders the correct interpretation of the observations. To illustrate this within the AIR framework, we consider the presence of an external magnetic field with relative strength $\alpha_{ext}$ (see \cref{app:classical}) and train models at different strengths. Importantly, such an external force introduces the necessity of encoding both factors $m$ and $q$ for both actions. In \cref{fig:classical_exp}c, we show the difference of the variances for each action. Here, a high variance means that one neuron is more important for one action than for the other. Focusing on $a_M$ (blue line), we observe that at $\alpha_{ext}=0$ the difference is maximal, due to a single neuron sufficing to describe the output of this action. For larger $\alpha_{\mathrm{ext}}$, the output of $a_M$ depends on both the mass and the charge. Consequently, a second neuron becomes active, leading to a decrease in $\sigma^2_2$ and, therefore, a smaller variance difference.

Interestingly, the decrease in $\sigma^2$ does not occur abruptly, since the effect of the field is negligible for small $\alpha_{ext}$ and barely influences the output trajectories. As its contribution increases, the latent space must begin to encode this variable, as neglecting it would prevent accurate reconstruction and lead to a higher reconstruction loss. This results in a gradual decrease of the variance difference, reflecting the growing influence of the field on the output trajectories. The same trend is observed for $a_E$. For larger $\alpha_{ext}$, both latent neurons are active, as shown in \cref{fig:classical_exp}d. The side panels show that now both neurons encode a mixed representations of the factors of variation. This highlights the effect of uncontrollable factors, such as the magnetic field here, on the learned representations. Indeed, due to the external force both $c=(m,q/m)$ and $z(m,q) = (z_1(m,q), z_2(m,q))$, with $z$ being an invertible function, are minAIRs, and hence the model has no preference between them.

\subsection{Quantum physics experiment}
\label{sec:quantum_exp}

\begin{figure*}
\includegraphics[width=\textwidth]{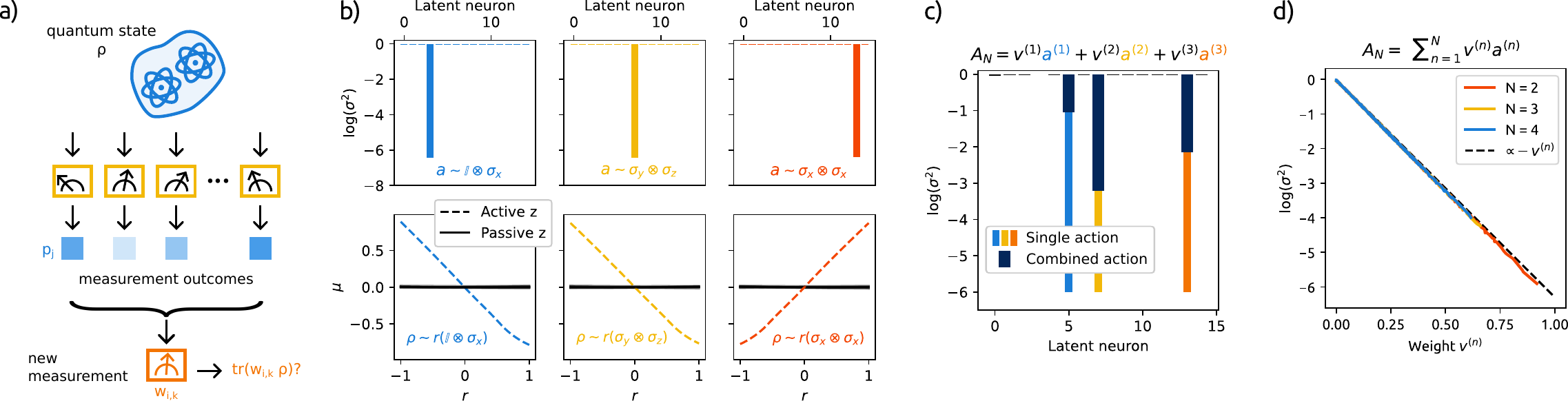}
\caption{\textbf{Quantum experiment}
\textbf{a)} Schematic representation of the experiment: a quantum state $\rho$ is repeatedly measured. From the measurement outcomes, the goal is to predict the outcome of a new, given measurement of $\rho$.
\textbf{b)} Upper plots: output variances of $E_A$ for different actions in the training set, showing that a single neuron is active for each action. Lower plots: output means of $E_X$ for the same active neurons (dashed) as a function of the tuning parameter $r$. Passive neurons (solid) are also plotted, and remain constant for all values of $r$.
\textbf{c)} Output variances of $E_A$ for an action combination $A_N$ with $N=3$ (dark blue).  Other colors show the output when each individual action $a^{(n)}$ from the combination is independently fed into $E_A$.
\textbf{d)} Output variances of $E_A$ for action combinations $A_N$, as a function of the parameter $v^{(n)}$, restricted to neurons active for the respective $a^{(n)}$. Dashed line shows the scaling $\log(\sigma^2)\propto - v^{(n)}$
} 
\label{fig:quantum_exp}
\vspace{-0.2cm}
\end{figure*}

We now consider a quantum physics experiment related to quantum tomography~\cite{paris2004quantum}. The latter aims to characterize a quantum state by means of a set of measurements. This example will help us showcase the generalization capabilities of the action encoder $E_A$ to unseen actions, and in particular to action combinations.

Here we consider a dataset of 2-qubit quantum states, each uniquely described by a certain density matrix $\rho$, a complex, Hermitian $4\times 4$ matrix with trace 1. We then select 75 random projective measurements $\{\ket{\psi_{j}}\bra{\psi_{j}}\}_{j=1}^{75}$, fixed before training. From these, each observation $x$ is the collection of probabilities $p_j$ of finding the system described by the density matrix $\rho$ in each of the states $\ket{\psi_j}$, namely $x = \big \{p_j=\bra{\psi_j} \rho \ket{\psi_j} ~|~p_j\in [0,1]\big \}_{j = 1}^{75}$. The task consists of predicting the expectation value of a selected Pauli projector $a_{i,k}=w_{i,k}$, $y_{a_{i,k}} = \mathrm{tr}(w_{i,k} \rho)$, where $w_{i,k}$ is defined below.

From this setting, we can deduce that the dataset has 15 real-valued factors of variation, as any state $\rho$ can be described by a complex $4\times4$ matrix, while one element of the matrix can be calculated from the normalization constraint. 
To facilitate the interpretation of the latent space, we define $\{w_{i,k}\}_{i,k=0}^{3}\backslash\{w_{0,0}\}$ as the set of projectors onto the $+1$ eigenstates of the two-qubit \textit{Pauli} operators, i.e. $w_{i,k} = (\mathbb{I} + \hat{\sigma}^{\textrm{I}}_i \otimes \hat{\sigma}^{\textrm{II}}_k)/2$, where $i,k=0,...,3$ denote the different Pauli matrices and superindices I, II denote the qubit indices 1 and 2. These projectors $w_{i,k}$ are the inputs to $E_A$.
More details can be found in \cref{app:quant_exp}.

Training VAIR on such a dataset we obtain 15 mixed neurons. As shown in \cref{fig:quantum_exp}b, upper panels, each neuron is active only for a single action, while the others remain at $\log{\sigma^2}\rightarrow0$. To explore the learned representations, we create a test set of states of the form $\rho(r) = \frac{1}{4}\left(\mathbb{I} + r (\hat{\sigma}^{\textrm{I}}_i \otimes \hat{\sigma}^{\textrm{II}}_k)\right)$. As shown in the lower panels of \cref{fig:quantum_exp}b, each $\mu_i$ responds solely to a given $\hat{\sigma}_i \otimes \hat{\sigma}_k$, meaning that the VAIR has exactly learned the Bloch representation of the qubit state~\cite{gamel2016entangled}. We again note that this is an action-induced representation in accordance with minAIR, and that choosing a different set of actions, as e.g., a different combination of Pauli matrices, will result in a similar pattern of mixed neurons, but a completely different representation in the latent mean values.

\paragraph*{Combinations of actions} We now explore the generalization capabilities of the action encoder $E_A$ to accurately represent a given combination of actions. To this end, we consider actions of the type 
$A_N = \sum_{n=1}^N v^{(n)} a^{(n)}$ 
with $v^{(n)}\in[0,1]$ and $\sum_{n=1}^N v^{(n)}=1$ and randomly chosen $a^{(n)}=a_{i,k}$. In \cref{fig:quantum_exp}c we show the variances predicted for a given $A_N$ with $N=3$ (dark blue). We also show the variance for each separate action $a^{(n)}$ composing $A_N$. As shown, the encoder conveys the additive nature of the action and activates the necessary latent neurons even though it has not been trained on action combinations. Note that the combined action $A_N$ is not a physical measurement and cannot be used to calculate output probabilities, i.e., $y_{A_N}$ is undefined. Nevertheless, it is useful for the purpose of interpreting the results in the latent space.

Interestingly, the value of the predicted variance directly depends on the weight of a particular action. To show this, we consider different $A_N$ with $N=2,3,4$ and track the predicted variance for the neuron encoding the corresponding $a^{(n)}=a_{i,k}$ w.r.t. the value $v^{(n)}$, showcasing a surprisingly consistent linear behavior.
In particular, as $v^{(n)}$ increases, $\sigma^2$ decreases, as the decoder needs further information from the neuron encoding the corresponding $a^{(n)}$ to correctly reconstruct its output.
This behavior shows that the encoder $E_A$ balances the noise of the latent neuron according to the action weight on the output, and most importantly that it generalizes beyond the training set of single actions to action combinations.

\section*{Conclusion}

In this work, we introduced \emph{action-induced representations} (AIR), a framework for recovering disentangled representations of physical systems through their dependence on actions describing experimental settings. We provided theoretical results showing that minimal AIRs (minAIRs) provably disentangle the system’s degrees of freedom based on their action-dependence. This disentanglement arises from the overlap between actions and the latent variables they influence, guaranteed by \cref{thm:disentanglement}.

To realize this framework in practice, we proposed \textit{VAIR}, a VAE-based architecture specifically designed to approximate minAIRs. The model leverages a dual-encoder structure in which one encoder extracts latent representations of the system state, while the other controls which latent variables are active for a given action. Across a range of experiments, from abstract benchmarks to classical and quantum physics simulations, we demonstrated that VAIR consistently outperforms standard VAE architectures in recovering disentangled and interpretable latent variables. Moreover, we showed that the architecture allows us not only to extract the system’s hidden properties, but also to interpret the role of actions, by identifying the action-dependence of each degree of freedom.

Looking forward, we see several promising directions. One key avenue is to integrate VAIR into reinforcement learning pipelines, where agents could actively select the actions that maximize disentanglement or information gain, effectively guiding experimentation towards interpretable representations. 
Additionally, while our current analysis focuses on discrete or fixed action sets, extending the theoretical guarantees to continuous or high-dimensional action spaces remains an open and relevant challenge. Finally, we believe that embedding architectures like VAIR into real-world experimental systems could aid the autonomous discovery of novel phenomena in physical systems by guiding researchers not only to extract minimal representations of their data, but also to understand the effects different experimental interventions have over the system.

\section*{Data availability}
The code necessary to reproduce the results presented in this paper is available at \url{https://github.com/gorkamunoz/air}.

\section*{Acknowledgments}
This research was funded in part by the Austrian Science Fund (FWF) [SFB BeyondC F7102, DOI: 10.55776/F71; WIT9503323, DOI: 10.55776/WIT9503323]. For open access purposes, the author has applied a CC BY public copyright license to any author accepted manuscript version arising from this submission. This work was also supported by the European Union (ERC Advanced Grant, QuantAI, No. 101055129). The views and opinions expressed in this article are however those of the author(s) only and do not necessarily reflect those of the European Union or the European Research Council - neither the European Union nor the granting authority can be held responsible for them. 

\bibliography{biblio}
\bibliographystyle{icml2026}


\newpage

\onecolumn
\newpage

\appendix

\section{Motivation of the axioms for minAIRs}\label{app:minAir_motivation}
In this appendix section, we motivate our assumptions for minAIRs. They express what an ideal minimal parameterization of the task manifolds $Y_A$ should be like. For readability, we state the definition here again:

\begin{definition}
    An AIR $\left( Z, \psi, (I_{A}, \phi_{A})_{A \in P_\mathbb{A}}\right)$ is an \emph{ideal} or \emph{minimal AIR} (minAIR) if the following extra conditions are satisfied:
    \begin{enumerate}
        \item $\psi: X \rightarrow Z$ is surjective.\label{item:minair_surj}
        \item $d_{Z} = |\bigcup_{A \in P_\mathbb{A}}I_{A}|$. \label{item:minair_dim}
        \item All $\phi_{A}: Z_{A} \rightarrow Y_{A}$ are invertible and the inverses are continuous.\label{item:minair_invertible}
        
        \item $Z$ is open in $\mathbb R^{d_Z}$, with respect to the (subset) topology of $\mathbb R^{d_Z}$\label{item:minair_open}
    \end{enumerate}
\end{definition}

Assumption~\ref{item:minair_surj}: We demand that $\psi$ is surjective, because the latent space $Z$ should be the image of the encoder, which is $\psi$. 

Assumption~\ref{item:minair_dim}: We require $d_Z = |\bigcup_{A \in P_{\mathbb A}} I_A|$ such that every entry in the parameterization is relevant for some task. Otherwise, we might have some stray factors of variation that do not matter for any of the tasks. Here, minimality is expressed by demanding that we have no factor of variation in our parameterization which is irrelevant for all tasks.

Assumption~\ref{item:minair_invertible}: We require the $\phi_A$ to be surjective such that they can reach every ground truth $y_A$. Furthermore, they should be injective, or faithful, such that every $y_A$ is uniquely identified by a single $z_A$. In particular, $z_A \ne \tilde{z}_A$ are guaranteed to belong to different $y_A$. Here, minimality is expressed by only having one $z_A$ per ground truth $y_A$, not several. 

Assumption~\ref{item:minair_open}: We require $Z \subset \mathbb R^{d_Z}$ to be open such that every entry $z_i$ is a free parameter which we can vary without perturbing the other entries. This is the case because every open set in $\mathbb R^{d_Z}$ contains an open $\epsilon$-ball around every point in the open set. Here, minimality is expressed as requiring that no $z_i$ is a function of the other entries, and therefore redundant.

When considering alternative definitions of minAIRs, one need to be aware of some pitfalls which we exemplify in Appendix~\ref{app:alternative_minAIRs}.

\section{Proof of Theorem~\ref{thm:disentanglement}}\label{app:proof_thm}

In this part of the appendix, we provide a proof of the disentanglement theorem. More specifically, we prove a more general theorem. If $Z$ itself is not convex, it can still be applied to any open, convex subset of $Z$. Since our VAIRs contain a part in the loss function which pushes the latent distribution to be close to the rotationally symmetric normal distribution, we expect that many of the latent spaces encountered in practice will be close to convex.

\begin{theorem}[Action-induced disentanglement, formal and detailed]\label{Theorem:AppendixDisentanglement}
    Let $\left( Z, \psi^{(z)}, (I^{(z)}_{A'}, \phi^{(z)}_{A'})_{A' \in P_{\mathbb A}}\right)$ and $\left( C, \psi^{(c)}, (I^{(c)}_{A'}, \phi^{(c)}_{A'})_{A' \in P_{\mathbb A}}\right)$ be minAIRs, and consider a subset  $Z|_{\mathrm{patch}}$ of $Z$. This subset is required to be convex and open in the embedding space $\mathbb R^{d_Z}$.
    
    Further, let $\mathcal{A} \subseteq P_{\mathbb A}$ be a set of allowed action combinations, and define $I^{(z)}_{O} :=\bigcap_{A'\in\mathcal{A}}I^{(z)}_{A'}$, as well as $ z_{O} :=\pi_{O}^{(z)}(z) := (z_j)_{j \in I^{(z)}_{O}} $ and $Z_{O} := \{\pi_{O}^{(z)}(z) \, \forall z \in Z\}$. Analogously for $c$.\footnote{Here, the $O$ is for ``overlap'' and replaces $\mathcal{A}$ in the main text.}
    
    Then:\\
    The function $g_O:  Z_{O}|_{\mathrm{patch}} \rightarrow C_{O}|_{\mathrm{patch}}$, with $Z_O |_{\mathrm{patch}} := \pi^{(z)}_O(Z|_{\mathrm{patch}})$ and $C_{O}|_{\mathrm{patch}}$  being the image of $g_O$, defined as
    \begin{align}
        g_{O} = \pi^{(c)}_{O} \circ (\phi^{(c)}_A)^{-1} \circ \phi^{(z)}_A \circ i_A ,\label{Equation:FromZOtoCO}
    \end{align}
    is well-defined and independent of the choice of $A \in \mathcal A$ and embedding $i_A(z_O) = z_A$ into $Z_A |_{\mathrm{patch}} := \pi^{(z)}_A (Z|_{\mathrm{patch}})$ such that $\pi^{(z)}_O(z_A) = z_O$. 
    Furthermore, $g_O$ is surjective and continuous, and $C_O |_{\mathrm{patch}}$ is path-connected. 

    If additionally, $C|_{\mathrm{patch}}\subseteq C_{\mathrm{conv}}\subseteq C$ where $C_{\mathrm{conv}}$ is a convex and open subset of $\mathbb{R}^{d_C}$, then, $g_O$ is also a homeomorphism, i.e., it is continuous and invertible, and its inverse is continuous too. Moreover, $C_O|_{\mathrm{patch}}$ is open in that case.
    
\end{theorem}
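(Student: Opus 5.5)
The plan is to route everything through the shared ground truth $y_A(x)$. For each $A\in P_{\mathbb A}$ define $h_A := (\phi^{(c)}_A)^{-1}\circ\phi^{(z)}_A : Z_A\to C_A$. It is a well-defined, continuous bijection with continuous inverse, because both $\phi_A$ are continuous bijections onto the same $Y_A$ with continuous inverses (minAIR condition 1). It also intertwines the encoders: for every $x\in X$ we have $h_A(\pi^{(z)}_A\psi^{(z)}(x))=\pi^{(c)}_A\psi^{(c)}(x)$, since both sides are sent to $y_A(x)$ by $\phi^{(c)}_A$.

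\textbf{Step 1 (a common function of $z$).} Given $z\in Z$, surjectivity of $\psi^{(z)}$ gives an $x$ with $\psi^{(z)}(x)=z$. Set $c:=\psi^{(c)}(x)$. Then $c_A=h_A(z_A)$ holds for every $A$ simultaneously, so $c_A$ and hence $c_O$ are determined by $z$ and not by the chosen $x$. Define $F(z):=\pi^{(c)}_O(c)$. For each $A\in\mathcal A$ this gives
\[ F(z)=\pi^{(c)}_O\bigl(h_A(z_A)\bigr), \]
so $F$ is continuous. Moreover, $F$ depends only on $z_A$ for every $A\in\mathcal A$.

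\textbf{Step 2 (dependence only on $z_O$).} I must show that $F(z)=F(z')$ whenever $z,z'\in Z|_{\mathrm{patch}}$ satisfy $z_O=z'_O$. Every index $j\notin I^{(z)}_O$ is missing from some $I^{(z)}_{A_j}$ with $A_j\in\mathcal A$. So changing only coordinate $j$ leaves $z_{A_j}$ fixed, and therefore leaves $F$ fixed.

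By convexity, the segment from $z$ to $z'$ lies in the patch and keeps $z_O$ fixed. By openness and compactness, I cover it with finitely many balls contained in the patch. Inside a ball, I pass between consecutive points of the segment by changing one non-$O$ coordinate at a time; axis-parallel moves stay in the ball, so $F$ is unchanged. Chaining the balls gives $F(z)=F(z')$.

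Hence $g_O(z_O):=F(z)$ is well defined. It equals the expression in Eq.~\eqref{Equation:FromZOtoCO} for every $A\in\mathcal A$ and every embedding $i_A$. Continuity follows by locally choosing the embedding to freeze the non-$O$ coordinates inside a small ball. Surjectivity onto $C_O|_{\mathrm{patch}}$ holds by definition of the image. Path-connectedness holds because $Z_O|_{\mathrm{patch}}$, a linear projection of a convex set, is convex, and continuous images of path-connected sets are path-connected. I expect this local-to-global step to be the main obstacle. The delicate points are that "independent of each missing coordinate" only becomes "depends only on $z_O$" through convexity and openness, and that the ball-chaining must stay inside the patch.

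\textbf{Step 3 (homeomorphism).} Now assume $C|_{\mathrm{patch}}\subseteq C_{\mathrm{conv}}$. I read this as: the $c$'s corresponding to patch points $z$ lie in the convex, open set $C_{\mathrm{conv}}$. Running Steps 1--2 with the roles of $z$ and $c$ swapped on $C_{\mathrm{conv}}$ gives a continuous map $\tilde g_O$ from $\pi^{(c)}_O(C_{\mathrm{conv}})$ to $Z_O$.

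Since $z$ and $c$ come from a common $x$, we get $\tilde g_O(g_O(z_O))=z_O$. So $g_O$ is injective, and therefore a continuous bijection onto its image with continuous inverse $\tilde g_O|_{C_O|_{\mathrm{patch}}}$. Openness of $C_O|_{\mathrm{patch}}$ then follows from invariance of domain, once I check that $g_O$ is a continuous injection between open subsets of spaces of equal dimension. The equality $|I^{(z)}_O|=|I^{(c)}_O|$ follows from the homeomorphisms $h_A$ together with invariance of domain applied to the open sets $Z_A$ and $C_A$, which are open as projections of open sets.
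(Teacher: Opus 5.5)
Your route is the paper's. You compare the two representations through the common ground truth $y_A$, show that $c_O$ is unchanged when a single non-overlap coordinate moves (that coordinate is missing from some $I^{(z)}_A$ with $A\in\mathcal A$), propagate this along the convex segment by a finite cover, and get the inverse by swapping $z$ and $c$ on $C_{\mathrm{conv}}$. In places you are more explicit than the paper: existence of the common $c$ via surjectivity of $\psi^{(z)}$ and a shared $x$, and continuity of $g_O$ by freezing the non-overlap coordinates. The step that fails as written is the dimension count behind openness. The homeomorphisms $h_A:Z_A\to C_A$ with invariance of domain give $|I^{(z)}_A|=|I^{(c)}_A|$ for each $A$. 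But the sizes of sets do not determine the size of their intersection; compare $\{1,2\},\{2,3\}$ with $\{1,2\},\{3,4\}$. So $|I^{(z)}_O|=|I^{(c)}_O|$ does not follow.

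You need one more application of invariance of domain. Write $m=|I^{(z)}_O|$ and $n=|I^{(c)}_O|$. Injectivity of $g_O$ on the open set $Z_O|_{\mathrm{patch}}\subseteq\mathbb R^m$ gives $m\le n$. For the converse, fix $z_O^0\in Z_O|_{\mathrm{patch}}$ and $A\in\mathcal A$. By your Step 2, $h_A$ maps the nonempty slice $\{z_A\in Z_A|_{\mathrm{patch}}:\pi^{(z)}_O(z_A)=z_O^0\}$ injectively into the affine space $\{c_A:\pi^{(c)}_O(c_A)=g_O(z_O^0)\}$. The slice is open in an affine space of dimension $|I^{(z)}_A|-m$, and the target has dimension $|I^{(c)}_A|-n$. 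Together with $|I^{(z)}_A|=|I^{(c)}_A|$, this gives $n\le m$. Alternatively, glue the $h_A$ over the unions $\bigcup_{A\in S}I_A$ for $S\subseteq\mathcal A$. This gives homeomorphisms between open projections of $Z$ and $C$, so the union sizes agree, and inclusion--exclusion finishes. The paper itself only says that $C_O|_{\mathrm{patch}}$ is ``the image of an open set under a homeomorphism'', which has the same hole; invoking invariance of domain, as you do, is the right instinct.

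A minor point concerns Step 2. The claim ``axis-parallel moves stay in the ball'' fails for an arbitrary order of coordinate changes: in the unit disk, $(0.9,0)\to(0.9,0.9)\to(0,0.9)$ exits. The paper uses boxes, which are closed under exchanging coordinates. Alternatively, center balls of radius $r$ at lattice points with spacing $<r$; every intermediate point $w$ between $p$ and $q$ satisfies $\|w-p\|\le\|q-p\|$, so it stays in the ball.
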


Before we continue to the proof, let us elaborate on the construction of the function $g_{O}: Z_{O}|_{\mathrm{patch}} \rightarrow C_{O}|_{\mathrm{patch}}$. In words, the function is constructed by the following scheme:
    \begin{enumerate}
        \item Pick any completion of $z_O$ to $z\in Z|_{\mathrm{patch}}$, and determine the corresponding $y_A, \ \forall A \in P_{\mathbb A}$.
        \item Find the corresponding $c$ which gives the same $y_A \ \forall A$.
        \item The $c_O$ in $c$ is the output. 
    \end{enumerate}
    Continuity of $g_O$ is defined with respect to the subset topology of the embedding spaces, i.e., using topologies generated by open $\epsilon$-balls intersected with $C_O |_{\mathrm{patch}}$ and $Z_O|_{\mathrm{patch}}$. Importantly, by the above theorem, $c_O$ is independent of the choice of completion to $z$, i.e. there is a unique $c_O$ corresponding to $z_O$. \\ 

\begin{proof}
    Consider any $z_O \in Z_O |_{\mathrm{patch}}$. Using the definitions of $Z_O$ and the $Z_A$, we can first complete $z_O \in Z_{O}|_{\mathrm{patch}}$ to a $z_A \in Z_{A}|_{\mathrm{patch}}$ with $A \in \mathcal A$, and then complete this $z_A$ to $z \in Z|_{\mathrm{patch}}$. This $z$ will have unique values $y_A \ \forall A\in P_{\mathbb A}$, given by the bijective $\phi_A^{(z)}$ as $\phi_A^{(z)}(\pi_{A}(z))$. Similarly, these $y_{A}$ correspond to a unique $c \in C$ via the bijective maps $\phi_A^{(c)}$, determined by the relation 
    \begin{align}
        \phi_A^{(c)}(\pi^{(c)}_A(c)) = y_A = \phi^{(z)}_A ( \pi^{(z)}_A(z)) \quad  \forall A \in P_{\mathbb A} \label{Equation:zyAc}
    \end{align}
     Eq.~\eqref{Equation:zyAc} fixes all entries of $c$, because the definition of minAIRs  requires that for all indices $i$ there exists a $A \in P_{\mathbb A}$ such that $i \in I_A $.

    Rearranging Eq.~\eqref{Equation:zyAc} gives us 
    \begin{align}
        \pi^{(c)}_O(c) = \left(\pi^{(c)}_O \circ (\phi_A^{(c)})^{-1} \circ \phi^{(z)}_A\right)(z_A) \label{Equation:cOfromzA}
    \end{align}
    for all $A \in \mathcal A$. For proving that Eq.~\eqref{Equation:FromZOtoCO} is well-defined, we have to show that Eq.~\eqref{Equation:cOfromzA} is independent of the completion $z_O \mapsto z_A$. Furthermore, since the left-hand side does not depend on $A$, Eq.~\eqref{Equation:cOfromzA} will show that $g_O$ is independent of the choice of $A$.

    Now, we prove that Eq.~\eqref{Equation:cOfromzA} is independent of the choice of completion. 

    In the following, we make implicit use of Lemma~\ref{Lemma:LinearCurves} proven below, which makes explicit the connection between a curve and its projection into a subspace.
    Consider now $z, \tilde{z} \in Z_{\mathrm{patch}}$ with $\tilde{z}_O = z_O$. Since $Z|_{\mathrm{patch}}$ is convex, the straight line which connects $z$ and $\tilde{z}$ while keeping $z_O$ constant is completely contained in $Z|_{\mathrm{patch}}$. This path can be covered with finitely many box-shaped sets. Here, we call a set $B$ \emph{box-shaped} if it is a higher-dimensional closed interval, i.e. $B = \bigtimes_{j=1}^{d_Z} [a_j, b_j]$ with $b_j > a_j$.
    
    To find the covering, use the openness to find a closed $\epsilon$-ball $B_1$ with $z$ as center, and a closed $\epsilon$-ball $B_2$ with same $\epsilon > 0$ that has $\tilde{z}$ as center. Because of convexity, the convex hull of $B_1 \cup B_2$ is also fully contained in $Z|_{\mathrm{patch}}$. In particular, every point on the straight line is the center of a closed $\epsilon$-ball that is fully contained in $Z|_{\mathrm{patch}}$ (intuitively, one obtains the convex hull of $B_1 \cup B_2$ by moving the $\epsilon$-ball $B_1$ around $z$ along the straight line towards the $\epsilon$-ball $B_2$ of $\tilde{z}$).
    
    For every $\epsilon$-ball, there exists a $\tilde{\epsilon} > 0$ such that there is a box-shaped subset with side-length $\tilde \epsilon$ which has the same center as the ball. On the straight line from $z$ to $\tilde{z}$, consider a lattice with end points $z$ and $\tilde{z}$ and with lattice spacing smaller than $\frac{\tilde{\epsilon}}{10}$. The covering is then given by the $\tilde{\epsilon}$-boxes that have the lattice points as their center.

    Inside of each box, we can apply the following argument: Consider any $z^{(1)} \ne z^{(2)}$ in the box with $z^{(1)}_O = z_O = z^{(2)}_O$. Start with $z^{(1)}$, and one by one we exchange an entry of $z^{(1)}$ with that of $z^{(2)}$. Note that this will not make us leave the box. We will now argue that changing one entry in a $\hat{z}$ with $\hat{z}_O = z_O$ in a box will not change the value of the $c_O$ associated to it by Eq.~\eqref{Equation:FromZOtoCO}. 

    Within the box, we can change any entry $\hat{z}_i$ with $i \notin I^{(z)}_O$ without changing any of the other entries. That entry is not in the overlap, so there is a $A \in \mathcal A$ such that $i \notin I^{(z)}_A$. Since the associated $\hat{z}_{A}$ does not get changed, Eq.~\eqref{Equation:cOfromzA}, applied to $\hat{z}_A$ instead of $z_A$, tells us that the associated $\hat{c}_O := \left(\pi^{(c)}_O \circ (\phi_A^{(c)})^{-1} \circ \phi^{(z)}_A\right)(\hat{z}_A)$ does not change. 

    As in every box $c_O$ only depends on $z_O$, and consecutive boxes overlap, this shows us that $c_O$ remains constant along the straight path. Since the endpoints $z$ and $\tilde{z}$ were arbitrary in $Z|_{\mathrm{patch}}$ to begin with (except for having the same $z_O$), this shows that in the entire patch $c_O$ only depends on $z_O$. So we have shown that Eq.~\eqref{Equation:FromZOtoCO} is independent of  the choice of both completion and $A\in\mathcal{A}$ and therefore well-defined. By definition $g_O$ is then also continuous and surjective.

    $C_O |_{\mathrm{patch}}$ is path-connected as the image of a continuous function $g_O$. \\

    The additional assumption $C|_{\mathrm{patch}}\subseteq C_{\mathrm{conv}}\subseteq C$ for a convex and open subset $C_{\mathrm{conv}}$ of $\mathbb{R}^{d_C}$ allows us to repeat the same arguments as in the proof above, but with $z$ and $c$ exchanged, and with $Z|_{\mathrm{patch}}$ replaced by $C_{\mathrm{conv}}$. This gives us a map $g_O^{-1} : \pi_O^{(c)}(C_{\mathrm{Conv}}) \rightarrow Z_O$. Its restriction to $C_O |_{\mathrm{patch}}$ is then the inverse of $g_O$.
    
    Specifically, starting from any $z_O$ and identifying the corresponding $c_O = g_O(z_O)$, the flipped proof of the theorem shows that this $c_O$ also uniquely determines the $z_O$ which can give the same $y_A \ \forall A$ under any choice of completion of $c_O$ to a $c$.

    The openness of $C_O |_{\mathrm{patch}}$ follows from the fact that $C_O |_{\mathrm{patch}}$ is the image of an open set $Z_O |_{\mathrm{patch}}$ under a homeomorphism.

\end{proof}

\begin{corollary}
    The disentanglement theorem from the main text holds.
\end{corollary}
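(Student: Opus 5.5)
The corollary follows by specializing Theorem~\ref{Theorem:AppendixDisentanglement} to the case where the patch is the whole latent space. I will take $Z|_{\mathrm{patch}} := Z$ and $C_{\mathrm{conv}} := C$, and write the overlap index $O$ for $\mathcal{A}$. This is legitimate because, in the main-text theorem, $Z$ and $C$ are convex by hypothesis and open by condition (2) of Definition~\ref{def:minAIR}. With these choices both hypotheses of the appendix theorem hold: $Z|_{\mathrm{patch}}$ is convex and open, and $C|_{\mathrm{patch}} \subseteq C_{\mathrm{conv}} \subseteq C$ with $C_{\mathrm{conv}}$ convex and open. Hence $Z_O|_{\mathrm{patch}} = Z_{\mathcal{A}}$.

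Next I match the two formulations of the map. The main-text map is
\[
g_{\mathcal{A}} = \pi^{(c)}_{\mathcal{A}} \circ (\phi^{(c)}_{A_j})^{-1} \circ \phi^{(z)}_{A_j},
\]
defined on $Z_{A_j}$. The appendix defines $g_O$ on $Z_O$ through an arbitrary embedding $i_{A}$. Since every $z_{A_j}\in Z_{A_j}$ is itself a completion $i_{A_j}(z_{\mathcal A})$ of its own overlap $z_{\mathcal A}=\pi_O(z_{A_j})$, the appendix conclusions translate as follows:
\begin{itemize}
\item Independence of the choice of $A$ means the main-text map does not depend on $j$.
\item Independence of the embedding $i_A$ means it depends on $z_{A_j}$ only through $z_{\mathcal{A}}$, i.e.\ $g_{\mathcal{A}}(z_{A_j}) = g_O(\pi_O(z_{A_j}))$.
\end{itemize}

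Finally I establish bijectivity. The second part of the appendix theorem applies, so $g_O$ is a homeomorphism onto its image $C_O|_{\mathrm{patch}}$, in particular bijective onto that image. It remains to show that this image is all of $C_{\mathcal{A}}$. Take any $c_O\in C_{\mathcal A}$ and complete it to some $c\in C$. Because $\psi^{(c)}$ and $\psi^{(z)}$ are surjective, there is an $x$ with $\psi^{(c)}(x)=c$; set $z=\psi^{(z)}(x)$. Both representations then reproduce the same $y_A(x)$ for every $A$, so by construction $g_O(z_O)=c_O$. This gives surjectivity, so $g_{\mathcal A}$ is a bijection from $Z_{\mathcal A}$ to $C_{\mathcal A}$.

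The only genuine obstacle is this image-matching step. The rest is bookkeeping identifying the main-text objects with the appendix objects.
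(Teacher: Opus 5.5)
Your proof is correct. Its skeleton matches the paper's: specialize Theorem~\ref{Theorem:AppendixDisentanglement} to $Z|_{\mathrm{patch}}=Z$ and $C_{\mathrm{conv}}=C$, then show that the image of $g_O$ is all of $C_{\mathcal A}$.

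You diverge only in that image-matching step. The paper gets it from the flipped map $g_O^{-1}:\pi^{(c)}_O(C_{\mathrm{conv}})\equiv C_O\to Z_O$. That is, it reruns the whole patch argument with $z$ and $c$ exchanged and observes that this reverse map is defined on all of $C_O$. You instead exhibit a preimage directly. You pull $c$ back to an $x$ through $\psi^{(c)}$, push it forward with $\psi^{(z)}$, and then use the AIR identity $\phi_A\circ\pi_A\circ\psi=y_A$ together with injectivity of $\phi^{(c)}_A$.

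Your route is more elementary, and it makes explicit the role of encoder surjectivity. The paper's appendix argument uses that fact only tacitly, when it asserts that the $y_A$ produced by a point of one latent space are jointly realized by some point of the other. Your route also shows that surjectivity onto $C_{\mathcal A}$ needs only the convexity of $Z$ (so that $g_O$ is well defined on $Z_{\mathcal A}$), not that of $C$. Convexity of $C$, via the flipped argument, is still needed for injectivity, and you correctly take that from the homeomorphism clause.

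One minor point: only the surjectivity of $\psi^{(c)}$ is actually used. For $\psi^{(z)}$ you just need $\psi^{(z)}(x)\in Z$.
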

\begin{proof}
    Consider Theorem~\ref{Theorem:AppendixDisentanglement} with $Z|_{\mathrm{patch}}=Z$ and $C_{\mathrm{conv}} = C$. Both are convex by the requirement of the main text theorem and open by definition of minAIRs.

    The only part left to be shown is that $C_{O}|_{\mathrm{patch}} = C_O$, i.e., that the image of $g_O$ is the full set $C_O$.

    This is a consequence of the map $g_O^{-1}: \pi_O^{(c)}(C_{\mathrm{Conv}}) \equiv C_{O} \rightarrow Z_O$, which together with $g_O : Z_O |_{\mathrm{patch}} \equiv Z_O \rightarrow C_O$ shows us that to every $z_O \in Z_{O}$ there is a unique $c \in C_{O}$ which can give the same $y_A \  \forall A$ under arbitrary completion of $z_O$ to a $z \in Z$, and vice versa. 
\end{proof}

\subsection{Projecting convex curves}

Here, we state and prove a illustrative Lemma which is used implicitly in the proof of Theorem~\ref{thm:disentanglement} to make statements about projected curves.

\begin{lemma} \label{Lemma:LinearCurves}
    Consider an open and convex set $C \subseteq \mathbb R^{d_C}$. Pick an index set $I \subset \{1,2,\dots , d_C\}$, and use the notation 
    \begin{align}
        \pi_I(c) = c_I = (c_i)_{i \in I}
    \end{align}
    Furthermore, define $C_I = \{ \pi_I(c)\ \ \forall c \in C \}$. Then:\\
    
    $C_I$ is open in $\mathbb R^{|I|}$, i.e. each point $c_I \in C_I$ has an open $\epsilon$-ball environment which is a subset of $C_I$. Also, $C_I$ is convex too, which can be elaborated in the following way:

    For each convex-linear curve in $C$
    \begin{align}
        c(p) = (1-p) \cdot c^{(1)} + p\cdot c^{(2)}, \ \ \ p \in [0,1] \label{Equation:Clinear}
    \end{align}
    exists a convex-linear curve 
    \begin{align}
        \pi_{I}(c(p)) = (1-p) \cdot \pi_{I}(c^{(1)}) + p\cdot \pi_{I}(c^{(2)}), \ \ \ p \in [0,1] \label{Equation:PiLinear}
    \end{align}
    in $C_I$. Similarly, each convex-linear curve $c_I(p)$ in $C_I$ can be lifted to a convex-linear curve in $C$ which satisfies $\pi_I(c(p)) = c_I(p)$.  
\end{lemma}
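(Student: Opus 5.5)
The lemma has three parts: openness of $C_I$, the pushforward of convex-linear curves, and their lifting. I will prove all three directly from the observation that $\pi_I$ is a linear, surjective coordinate projection $\mathbb R^{d_C}\to\mathbb R^{|I|}$. No earlier result of the paper is needed.

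\emph{Pushforward of curves.} Linearity gives the first claim in one line. For $c(p)=(1-p)c^{(1)}+p\,c^{(2)}$ with $c^{(1)},c^{(2)}\in C$, convexity of $C$ gives $c(p)\in C$ for all $p\in[0,1]$. Applying the linear map $\pi_I$ componentwise yields exactly Eq.~\eqref{Equation:PiLinear}, and each $\pi_I(c(p))$ lies in $C_I$ by definition. Since any two points of $C_I$ are images of points of $C$, this shows $C_I$ is convex.

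\emph{Lifting of curves.} Let $c_I(p)=(1-p)c_I^{(1)}+p\,c_I^{(2)}$ be a convex-linear curve with endpoints in $C_I$.
\begin{enumerate}
\item By definition of $C_I$, choose preimages $c^{(1)},c^{(2)}\in C$ with $\pi_I(c^{(k)})=c_I^{(k)}$.
\item Set $c(p)=(1-p)c^{(1)}+p\,c^{(2)}$. This curve lies in $C$ by convexity.
\item By the pushforward computation, $\pi_I(c(p))=c_I(p)$.
\end{enumerate}
The only subtlety is that the preimages are not unique. Any choice works, so no selection argument is needed.

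\emph{Openness.} This is the main, though mild, point. Given $c_I\in C_I$, pick $c\in C$ with $\pi_I(c)=c_I$, and use openness of $C$ to get $\epsilon>0$ with $B_\epsilon(c)\subseteq C$. For any $u\in\mathbb R^{|I|}$ with $\|u-c_I\|<\epsilon$, define $\hat c$ by:
\begin{itemize}
\item $\hat c_i=u_i$ for $i\in I$;
\item $\hat c_i=c_i$ for $i\notin I$.
\end{itemize}
Then $\|\hat c-c\|=\|u-c_I\|<\epsilon$, so $\hat c\in C$, and $\pi_I(\hat c)=u$. Hence $B_\epsilon(c_I)\subseteq C_I$.

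The step to watch is checking that this coordinate-wise completion stays inside the ball. It does, because the Euclidean norm of a vector supported on the coordinates in $I$ equals the norm of its projection. Equivalently, $\pi_I$ is an open map because it is a coordinate projection. Openness does not require convexity; convexity is used only in the two curve statements.
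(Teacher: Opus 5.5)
Your proof is correct and follows the same route as the paper: linearity of the coordinate projection to push curves forward, arbitrary preimages combined with convexity of $C$ to lift them, and projecting an $\epsilon$-ball around a preimage to get openness. Your coordinate-wise completion $\hat c$ spells out the paper's one-line claim that applying $\pi_I$ to the ball $B_\epsilon(c)\subseteq C$ yields an $\epsilon$-ball around $c_I$ contained in $C_I$.
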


\begin{proof}
    Going from Eq.~\eqref{Equation:Clinear} to Eq.~\eqref{Equation:PiLinear} works because $\pi_I$ just picks entries corresponding to $I$, but leaves those entries unchanged. By definition of $C_I$, all of the $\pi_I(c(p))$ are in $C_I$. Similarly, going the other way round works because any $c^{(1)}_I,c^{(2)}_I$ can be completed to $c^{(1)},c^{(2)}$, and the convex-linear curve between them is in $C$ because $C$ is convex.

    The openness of $C_I$ follows because any point $c_I$ can be completed to a $c \in C$. Since $C$ is open in $\mathbb R^{d_C}$, there is an open $\epsilon$-ball around $c$. Applying $\pi_I$, i.e. leaving out entries which are not $I$, gives us an $\epsilon$-ball in $|I|$ dimensions which is completely contained in $C_I$.
\end{proof}

\section{Alternative definitions of minAIRs}\label{app:alternative_minAIRs}

As a motivation for why it may be worthwhile to consider relaxations of the minAIR definition, consider the following example of a pitfall.

\begin{example}
    The following counter-example shows that not all action/experiment settings allow for minAIRs. As a modification of the main text examples, consider three experiments $a=1,2,3$ predicting $y_1 = mg$, $y_2 = qE/m$ and $y_3 = q$, and there are no allowed action combinations, i.e $P_A = \{\{1\}, \{2\}, \{3\}\}$. Since $y_1, y_2, y_3$ are one-dimensional, also their pre-image in the latent-space $z$ should be one-dimensional. Up to permutation, this only gives the option $z = (f_1(m), f_2(q/m), f_3(q))$ for some functions $f_1,f_2,f_3$. However, this breaks the requirement that the full latent space should be open, because $q = \frac{q}{m} \cdot m$. The problem can be traced back to the fact that given $m$ and $q/m$ from experiments $a=1,2$, experiment $a=3$ only reveals redundant information which however has to be computed from several properties already known. 
\end{example}

The example has the problem that it is not always possible to make $Z$ open, while simultaneously making $|I_{A'}|$ minimal such that $\phi_{A'}$ can be invertible. Therefore, one might consider dropping the openness of $Z$, which allows to bring in redundant $z$ entries. Or one might allow the $\phi_A$ to not be homeomorphisms anymore, allowing higher-dimensional $z_A$ to be mapped to lower-dimensional $c_A$. However, we will now discuss counter-examples that show that with such relaxed definitions, we cannot guarantee the correctness of a generalized disentanglement theorem.

\begin{example}[$Z$ is open, no redundant entries in $Z$]
    In this counter-example, we first make the dimension $d_Z$ minimal such that $Z$ is open, and then make the dimension of $|I_{A'}|$ as small as possible while obeying the first minimization. 

    Consider $X=(q,m)$ and four one-dimensional tasks $y_1 = q$, $y_2 = m$, $y_3 = qm$ and $y_4 = q/m$. By our requirements, we only have two latent nodes, which we pick in the following way:
    \begin{align}
        z = (q,m) && c = (qm, m)
    \end{align}
    We have $|I^{(z)}_1| = |I^{(z)}_2| = 1$ and $|I^{(z)}_3| = |I^{(z)}_4| = 2$, while we have $|I^{(c)}_1| = |I^{(c)}_4| = 2$ and $|I^{(c)}_2| = |I^{(c)}_3| = 1$. This example shows that if we minimize $d_Z$ first, there will not be a unique minimal dimension for each $|I_{A'}|$ separately. Rather, making one smaller can force another one to become larger. Therefore, a recommended notion of minimality here would be that no $|I_{A'}|$ can be made smaller without increasing other ones collectively by at least the same amount. 

    Now, pick task $3$ with index sets $I^{(z)}_3 = \{1,2\}$ and $I^{(c)}_3 = \{1\}$. This means $z_{a=3} = (q,m)$ and $c_{a=3} = qm$. These have different number of degrees of freedom, and can therefore not be bijectively related. 
\end{example}

\begin{example}[$\phi_{A'}$ are homeomorphisms, $Z_{A'}$ minimal.]
    In this counter-example, we will force $|I_{A'}|$ to be equal to the dimension of $Y_{A'}$ and $\phi_{A'}$ homeomorphisms, and then make $d_Z$ as small as possible while obeying the first minimization. The counter-example does not allow for action-induced disentanglement. 

    Consider $X = (x_1, x_2)$ and one-dimensional tasks $y_1 = x_1\cdot x_2$ and $y_2 = x_1+x_2$ and \textbf{two} reconstruction tasks $y_4 = y_3 = (x_1, x_2)$. Tasks $a=1,2$ force two of the entries of $z$ (up to permutation and application of bijective one-dimensional functions) to be $z = (x_1\cdot x_2, x_1 + x_2, \dots )$. Note that both entries are invariant under a swap $x_1 \leftrightarrow x_2$. This means that both entries have the same values for $(x_1, x_2) = (3,2)$ and $= (2,3)$. Therefore, it is \textbf{not} possible to uniquely determine $x_1$ and $x_2$ from these two entries alone. But let us consider the following two latent spaces which do allow to solve the reconstruction tasks:
    \begin{align}
        z = (x_1\cdot x_2, x_1 + x_2, x_1), && c= (x_1\cdot x_2, x_1 + x_2, x_2).
    \end{align}
    For the reconstruction tasks, we pick $I_{3} = \{ 1, 3\}$ and $I_4 = \{2,3\}$, for both $z$ and $c$. Then, the overlap nodes of $A' = \{3\}$ and $A'' = \{4\}$ are just $z_O = x_1$ and $c_O = x_2$, which are independent degrees of freedom and therefore \textbf{not} related by a bijective map. 
\end{example}

While the example fits into our framework and satisfies all requirements, we acknowledge that it relies on a bad choice of task-specific latent nodes $I_3$ and $I_4$. Since the two tasks are identical copies, we could just have picked $I_3 = I_4$ instead and no problems would have occurred. 

However, this would rely on introducing an extra condition that the trainable mask always picks the latent nodes in such a way that it avoids contradictions like the above. This could be done in a form that says $g_O$ only has to exist for some choices of $I^{(z)}_{A'}$ and $I_{A'}^{(c)}$, rather than particular ones fixed in advance. Such a condition is rather questionable, because it would require already at the definition stage of a latent space $Z$ that the index sets $I^{(z)}_{A'}$ are fixed such that a disentangling map $g_O$ can exist for all other latent representations $c$.

In principle, however, such a unique fixing of index sets could exist if there is one global loss minimum for the trainable mask which uniquely determines the index sets such that they always allow for a disentangling bijective map $g_O$. In our example, the two contradicting tasks are identical, so the existence of such a benevolent global loss minimum seems possible. Nonetheless, in general, it would seem like a very unlikely, fine-tuned coincidence.

\section{Datasets}

We now present a complete description of the datasets used in the different numerical experiments.

\subsection{Abstract experiment}
\label{app:abstract}

In \cref{sec:abstract} we consider an abstract problem directed to illustrate the various theoretical concepts presented. We define for that a simple dataset arising from four factors of variation $c_1, \dots,c_4$. An observation is then given by a set of functions of the hidden factors
\begin{equation}
    x = \Big[ f_k(c_1,\dots,c_4) \Big]_{k=1}^{d_x}
\end{equation}
where $d_x$ is the observation's dimension and $f_k(\vec{c}) = \sin \left (\sum_i w_i c_i  + b_i \right)$  with $w_i$ and $b_i$ being uniformly sampled from $[0,1]$. We then consider two different actions $a_1$ and $a_2$. These actions (or experiments) produce outputs
\begin{align}
    y_{a_1} &= \Big[ g_k(c_1,c_2) \Big]_{k=1}^{d_y}, \\
    y_{a_2} &= \Big[ g'_k(c_2,c_3, c_4) \Big]_{k=1}^{d_y},
\end{align}
where $g_k$ and $g'_k$ are functions of the same form as $f_k$. Importantly, the random parameters $w$ and $b$ are fixed prior to training, allowing the models to learn them through training.

\subsection{Classical experiment}
\label{app:classical}

\paragraph*{Experiment description}
In \cref{sec:classical} we consider the 2D motion of a particle with mass $m$ and charge $q$ under the influence of two distinct actions. The first, referred to as \textit{the mass action} ($a_M$), considers the elastic collision of the object with a fixed, chargeless test mass $M > m$ with velocity $v_H$, inducing a uniform rectilinear motion along a specified direction $\vec{e}_k$. By means of energy and momentum conservation, and consider an elastic collision, the particle's trajectory after impact is described by

\begin{equation}\label{eq:x_H}
    \vec{x}_H(t) = \frac{2Mv_H}{M+m}t \cdot \vec{e}_k =: v_m t \cdot \vec{e}_k, 
\end{equation}

where $v_0$ is the initial velocity, fixed for all experiments. The second action, termed \textit{the electric action} ($a_E$), consists of activating a plate capacitor containing the arena where the particle moves. Under the influence of a uniform electric field $\vec{E}$, the particle undergoes an uniformly accelerated rectilinear motion, with the capacitor aligned such that the motion occurs in the $-\vec{e}_k$ direction, i.e. $\vec{E}=E\vec{e}_k$. In this case, one can compute the trajectory of the particle by means of Newton's second law
\begin{equation}\label{eq:x_E}
    \vec{x}_E(t) = -\frac{qE}{m}t^2 \cdot \vec{e}_k 
\end{equation}

Finally, we extend the analysis to account for the influence of an external magnetic field on the particle's trajectory. For that, we consider that in all experiments the particle has a small initial velocity $v_0$ and consider then the presence of a magnetic field $\vec{B}=B\vec{e}_z$. The resulting motion is uniformly circular and described by 
\begin{equation}\label{eq:x_ext}
    \vec{x}_{ext}(t) = R\big[\cos({\frac{v_0}{R}t+\theta_0})\vec{e}_x +\sin({\frac{v_0}{R}t+\theta_0})\vec{e}_y\big]
\end{equation}
with the radius $R=\frac{mv_0}{qB}$ and $\theta_0$ being the initial angle of the particle. Importantly, we assume that the particle's acceleration is sufficiently small, allowing the magnetic field contribution to be approximated by Eq. (\ref{eq:x_ext}), where the velocity is treated as constant and equal to the initial velocity. This approximation avoids the need to solve a differential equation, thereby simplifying the analysis without introducing unnecessary complexity. We then consider that the resulting motion of the particle in the magnetic field, after any of the two actions ($a_E$ or $a_M$) has been applied, is given by $\vec{x}(t)=\vec{x}_{E \text{ or } H}+ \alpha_{ext}\vec{x}_{ext}$, where $\alpha_{ext} \in [0,1]$.

\paragraph*{Dataset details} As commented, the observation in this experiment is the concatenation of the trajectories of the particle under both actions. We consider here that the particle is recorded for $T=200$ time steps, leading to an observation with $d_x= 2\times2\times T = 800$. The action represented by a one hot encoding of dimension $d_a=2$, where $[0,1]$ represents the mass action and $[1,0]$ the electric action. The outcome $y$ to be predicted is the trajectory under the given action, hence $d_y = 2 \times T = 400$.

\subsection{Quantum experiment}\label{app:quant_exp}

\paragraph*{Experiment description} In \cref{sec:quantum_exp}, we consider the quantum tomography of a two-qubit quantum system. We consider that the quantum system is prepared in different arbitrary mixed states $\rho$, a $2^n \times 2^n$ Hermitian matrix, where $n=2$ is the number of qubits. We then select 75 random measurements $\{\ket{\psi_{j}}\bra{\psi_{j}}\}_{j=1}^{75}$, fixed before training, that provide a collection of observations, i.e., the probability of finding the state $\rho$ in any of the states $\ket{\psi_j}$, $x = \big \{(p_j)_{j=1}^{75}\big|\ p_j=\bra{\psi_j} \rho \ket{\psi_j}\in [0,1]\big \}$. Then we select Pauli projectors of the form $w_{i,k} = (\mathbb{I} + \hat{\sigma}^{\mathrm{I}}_i \otimes \hat{\sigma}^{\mathrm{II}}_k)/2$, where $i,k=0,...,3$ point to the different Pauli matrices 
\begin{equation}
\hat{\sigma}_x = 
\begin{pmatrix}
0 & 1 \\
1 & 0
\end{pmatrix}, \quad
\hat{\sigma}_y = 
\begin{pmatrix}
0 & -i \\
i & 0
\end{pmatrix}, \quad
\hat{\sigma}_z = 
\begin{pmatrix}
1 & 0 \\
0 & -1
\end{pmatrix}, \quad
\mathbb{I} = 
\begin{pmatrix}
1 & 0 \\
0 & 1
\end{pmatrix}
\end{equation}
In particular, $\hat{\sigma}_0=\mathbb{I}, \hat{\sigma}_1=\hat{\sigma}_x,\hat{\sigma}_2=\hat{\sigma}_y,\hat{\sigma}_3=\hat{\sigma}_z$, which form a basis of Hermitian operators. Moreover, the superindices $(\mathrm{I}),(\mathrm{II})$ denote the qubit indices $(1)$ and $(2)$, respectively. 
 The set of projectors $\{w_{i,k}\}_{i,k=0}^3$ contains only non-trivial projectors which excludes $w_{0,0}=\mathbb{I}$ and thus $|\{w_{i,k}\}_{i,k=0}^3|=15$. The goal is to predict the expectation value of a selected Pauli projector $a_{i,k}=w_{i,k}$, $y_{a_{i,k}} = \mathrm{tr}(w_{i,k} \rho)$. 


We further create a test set consisting of states of the form $\rho(r) = \frac{1}{4}\left(\mathbb{I} + r (\hat{\sigma}^{\textrm{I}}_i \otimes \hat{\sigma}^{\textrm{II}}_k)\right)$, with $r \in [-1,1]$, to probe the learned representation. Since the encoder $E_x$ acts directly on the state $\rho$, its output $\mu$ is expected to depend on the parameter $r$, as demonstrated in the lower panels of Fig.~\cref{fig:quantum_exp}b. Each latent variable $\mu_l$, corresponding to one of the 15 latent neurons of $E_x$, responds exclusively to a specific operator $\hat{\sigma}_i \otimes \hat{\sigma}_k$. This behavior indicates that the VAIR has accurately learned the Bloch representation of the two-qubit state~\cite{gamel2016entangled}.

\paragraph*{Dataset details} As described above, the input to the encoder $E_x$ is a vector of dimension $d_x = 75$, corresponding to random projective measurement outcomes obtained from the density matrix $\rho$. The input to the second encoder, $E_a$, consists of the actions $a_{i,k}$, represented by $4\times4$ complex Pauli projectors. To process these inputs, each complex matrix is first flattened into a column-vector representation. The real and imaginary components are then separated to form two real-valued vectors of equal length, which are subsequently concatenated. This procedure yields a real-valued input vector of dimension $d_a = 2 \cdot 4 \cdot 4$. Finally, the decoder outputs a single measurement outcome $\mathrm{tr}(w_{i,k} \rho)$ with dimension $d_y = 1$.

\section{Training VAIRs}
\label{app:training}
In this section, we provide an overview of the variational autoencoder architecture used throughout this work. For all experiments presented in the main text, we use the same architecture. However, the dimensions of the observations $d_x$, actions representations $d_a$ and output $d_y$ vary, as commented above and presented in \cref{tab:dimensions}. Moreover, we also provide there further details such as the dataset size and the batch size used during training.

\begin{table}[h]
    \centering
    \begin{tabular}{|c|c|c|c|c|c|c|}
    \hline
        Experiment                               & $d_x$ & $d_a$ & $\dim Z$ & $d_y$ \\
        \hline 
        Abstract (\cref{sec:abstract})            & 10    & 2    & 4     & 10    \\ \hline
        Classical Physics (\cref{sec:classical}  & 800   & 2     & 2     & 400   \\ \hline       
        Quantum Physics (\cref{sec:quantum_exp}) & 75    & 32    & 15    & 1     \\ \hline
    \end{tabular}
    \caption{\textbf{Dataset dimensions.} Summary of the dimensions of the datasets used for training VAIR and other VAE variants. $\dim Z$ refers here to the dimension of the factors of variation (see \cref{sec:theory}).}
    \label{tab:dimensions}
\end{table}

\subsection{Architecture}
\label{app:architecture}

The ML model used in this work is schematically represented in \cref{fig:fig1}d. A detailed description of the neural network used throughout the work is presented in \cref{tab:archi_air}. As mentioned in the main text, it consists of two encoder networks, $E_{\text{x}}$ and $E_{\text{a}}$. For this work, both have a similar architecture, although that is not a requirement. On the one hand, $E_{\text{x}}$ gets as input the observation $x$ and predicts the mean values $\mu_i$ for each latent neuron $z_i$. On the other hand, $E_{\text{a}}$ gets as input the action $a$ and predicts the variances' logarithm $\log\sigma_i^2$ of the latent neurons. The latent space, composed of $d_z$ latent neurons, is then sampled using a multivariate normal distribution. In particular, each latent neuron's value $z_i$ is sampled from $\mathcal{N}(\mu_i, \sigma_i)$. During training, we use the reparameterization trick to allow for gradient calculation, so that $z_i=\mu_i+\sigma_i \epsilon$, where $\epsilon$ is normal noise, as typically done in VAEs. The sampled latent space is then concatenated to the action representation and fed into the decoder $D$, which then predicts the outcome $y$.

\begin{table}[h]
\centering
\begin{tabular}{|l|l|}
\hline
\textbf{Module} & \textbf{Layers} \\
\hline
 \textbf{Encoder $E_{x}$} & Linear($d_x$, 512) \\
& Relu \\
 & Linear(512, 256) \\
 & Relu \\
 & Linear(256, $d_z$) \\
\hline
\textbf{Encoder $E_{a}$}  & Linear($d_a$, 512) \\
 & Relu \\
 & Linear(512, 256) \\
 & Relu \\
& Linear(256, $d_z$) \\
\hline
\textbf{Decoder $D$} & Linear($d_z + d_a$, 256) \\
 & Relu \\
 & Linear(256, 512) \\
 & Relu \\
 & Linear(512, $d_y$) \\
\hline
\end{tabular}
\caption{\textbf{VAIR architecture.} Layer composition of the different modules of the VAIR architecture}
\label{tab:archi_air}
\end{table}

\subsection{Training details}
\label{sec:training_details}

In \cref{tab:hyperparams} we present the training hyperparameters used in the different experiments. We use in all cases the Adam optimizer with gradient moving average coefficient $\beta_1 = 0.9$ and gradient squared moving average coefficient $\beta_2=0.99$. The training is performed by minimizing the ELBO in \cref{eq:loss_VAE}, where the reconstruction loss (first term) follows the usual choice of the mean squared error:
\begin{equation}
    \mathcal{L}_R = \frac{1}{N}\sum_{i=1}^N (y_i-y_i')^2
\end{equation}
where $y$ is the ground truth output and $y'$ is the prediction of the decoder $D$.

\begin{table}[h]
    \centering
    \begin{tabular}{|c|c|c|c|c|c|c|}
    \hline
    Experiment                               & Dataset size  & Batch size & Epochs & Learning rate & $\beta$   & $d_z$ \\ \hline
    Abstract (\cref{sec:abstract})           & $6\cdot 10^3$ & 100        & 5000    & $10^{-4}$     & $10^{-2}$ & 6     \\ \hline 
    Classical Physics (\cref{sec:classical}) & $10^4$        & 50         & 200    & $10^{-4}$     & $10^{-3}$ & 2     \\ \hline 
    Quantum Physics (\cref{sec:quantum_exp}) & $6\cdot 10^3$ & 200        & 2000   & $10^{-4}$     & $10^{-3}$ & 15    \\ 
    \hline
    \end{tabular}
    \caption{\textbf{VAIR hyperparameters.} Training hyperparameters for the training of VAIR in different experiments}
    \label{tab:hyperparams}
\end{table}

\section{VAE benchmarks}
\label{app:vae_bench}

\subsection{Architectures}

In \cref{sec:abstract}, we compared the performance of VAIR against four VAE-based baselines on the abstract experiment dataset. \Cref{fig:app_vaes} provides a schematic overview of these architectures.

The first variant, TC-VAE$_x$, is a standard VAE trained to reconstruct the observation $x$ from itself, i.e., both the input and output are $x$. In principle, this should suffice to recover the hidden variables of the system. To promote disentanglement, we train the model using the Total Correlation (TC) loss from Ref.~\cite{chen2018isolating}, defined as:
\begin{equation}
\mathcal{L}_{\text{TC}} := 
\mathbb{E}_{q(z \mid x^{(n)})p(x^{(n)})} \left[ \log p(x^{(n)} \mid z) \right]
- \alpha_{\text{TC}}\, I(z; x^{(n)})
- \beta_{\text{TC}}\, \mathrm{KL}\!\left(q(z) \,\|\, \prod_{j=1}^{d_z} q(z_j)\right)
- \gamma_{\text{TC}} \sum_{j=1}^{d_z} \mathrm{KL}\!\left(q(z_j) \,\|\, p(z_j)\right),
\end{equation}
where $I(z; x^{(n)})$ denotes the mutual information between latent and input.

The second variant, VAE$_{x,a}$, is a $\beta$VAE that receives both the observation $x$ and the action $a$ as input and is trained to predict the experimental outcome $y$. The third model, VAE$_{D_a}$, shares the same input structure as VAE$_{x,a}$, but also feeds the action $a$ directly into the decoder—replicating the decoding path of VAIR. Finally, the fourth variant, TC-VAE$_{D_a}$, uses the same architecture as VAE$_{D_a}$ but is trained with the TC loss described above.

To make the comparison as fair as possible, we devise the encoders and decoders of these architectures to have approximately the same number of parameters of the VAIR presented in \cref{tab:archi_air} (see \cref{tab:archi_vae} and \cref{tab:hyperparams_bench}). In all cases we train with $z=6$, The training then proceeds in the same way as for the latter, see \cref{sec:training_details} for details.

\begin{figure}[h]
\centering
    \includegraphics[width=0.8\textwidth]{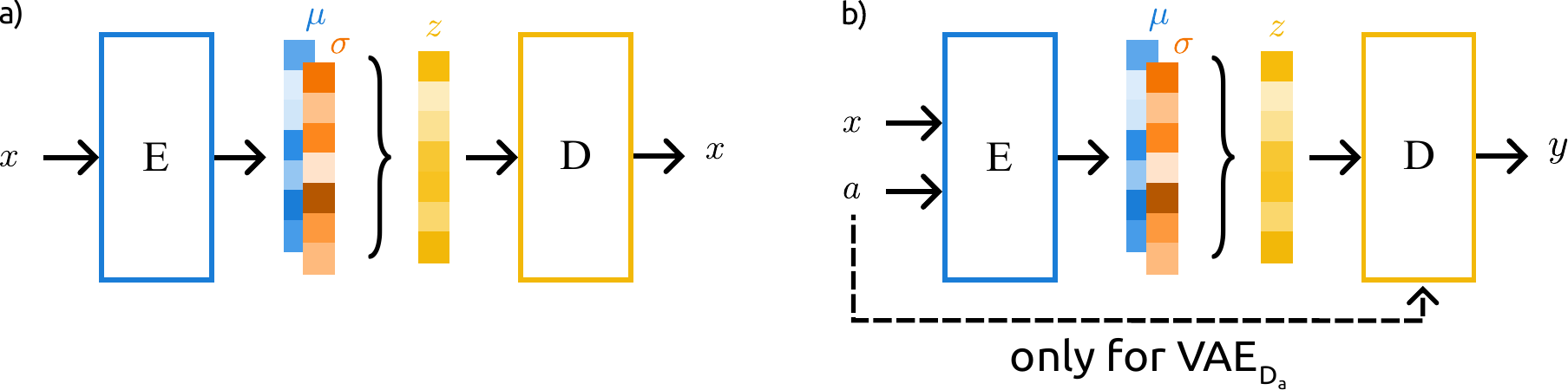}
    \caption{\textbf{Schematic representation of VAE benchmark architectures} a) TC-VAE$_x$, having as input and output the observation $x$. b) VAE$_{x,a}$ having as input the observation $x$ and $a$, and predicting the output $y$. Moreover, we further consider two variants, VAE$_{D_a}$ and TC-VAE$_{D_a}$, where the action is also input to the decoder. 
    } 
    \label{fig:app_vaes}
\end{figure}

\begin{table}[h]
\centering
\begin{tabular}{|l|l|}
\hline
\textbf{Module} & \textbf{Layers} \\
\hline
 \textbf{Encoder $E_{x}$} & Linear({\color{green} $d_x$} / $d_x+d_a$, 1000) \\
 & Relu \\
 & Linear(1000, 256) \\
 & Relu \\
 & Linear(256, $2d_z$) \\
\hline
\textbf{Decoder $D$} & Linear($d_z${\color{red} $+d_a$}, 260 / {\color{red} 256}) \\
 & Relu \\
 & Linear(260/ {\color{red} 256}, 512) \\
 & Relu \\
 & Linear(512, {\color{green} $d_x$} / $d_y$) \\
\hline
\end{tabular}
\caption{\textbf{Benchmark architectures} Layer composition of the encoder and decoder for the four VAE architectures used for benchmarking. Green highlight properties unique of TC-VAE$_x$ and red of VAE$_{D_a}$ and TC-VAE$_{D_a}$.}
\label{tab:archi_vae}
\end{table}

\begin{table}[h]
    \centering
    \begin{tabular}{|c|c|c|c|c|c|}\hline
         Model&  Number of parameters&  $\beta$&  $\alpha_{\text{TC}}$&  $\beta_{\text{TC}}$& $\gamma_{\text{TC}}$\\\hline
         TC-VAE$_x$&  410922&  -&  $1.45\times 10^{-4}$&  $4.25\times^{-3}$& $5.36\times^{-4}$\\\hline
         VAE$_{x,a}$&  412922&  $10^{-3}$&  -&  -& -\\\hline
         VAE$_{D_a}$&  411358&  $10^{-3}$&  -&  -& -\\\hline
         TC-VAE$_{D_a}$&  411358&  -&  $7.01\times10^{-4}$&  $3.73\times10^{-3}$& $1.00\times10^{-4}$\\ \hline
    \end{tabular}
    \caption{\textbf{Benchmark hyperparameters} Training hyperparameters for the benchmark models. All models, including VAIR (see \cref{tab:hyperparams}) where trained for 5000 epochs and a learning rate of $10^{-4}$.}
    \label{tab:hyperparams_bench}
\end{table}

\subsection{Learned representations and mutual information gap (MIG)}
\label{app:bench_migs}

We now explore the representations learned by the aforementioned VAE variants. For TC-VAE$_x$, no actions are considered, hence the model relies on the inductive bias of the TC-loss for disentanglement. On the other hand, VAE$_{x,a}$ must encode information about the action in the decoder, as the decoder needs it to reconstruct $y_a$. This rules out the AIR paradigm for this architecture. On the other hand, both VAE$_{D_a}$ and TC-VAE$_{D_a}$ can, in principle, reach minAIRs, as the action is fed into the decoder, and hence disentangle via \cref{thm:disentanglement}.

To analyze this behavior, we consider the abstract dataset of \cref{sec:abstract} and compute the mutual information gap (MIG) for each hidden variable $c_i$, calculated via the mutual information between $c_i$ and both the mean of the latent neurons $\mu_k$ (\cref{fig:app_vae_migs}a,b) and the sampled value $z_k$ (\cref{fig:app_vae_migs}c,d). The former, which we refer to as MIG$_\mu$, gives a more interpretable visualization of what is really encoded in each latent neuron and hence is then one shown in the main text in \cref{fig:abstract_exp}. For completion, we also provide the latter, which we refer to as MIG$_z$, for which the variance $\sigma_k^2$ is also considered. 

In \cref{fig:app_vae_migs} we show both MIGs for each input action $a_i$ for each of the following architectures: VAE$_{x,a}$, VAE$_{D_a}$, TC-VAE$_{D_a}$ and VAIR. Focusing first on MIG$_\mu$ (\cref{fig:app_vae_migs}a,b), we see no change for VAIR w.r.t. the action, as the means $\mu_k$ depend solely on the observation $x$. On the other hand, we see a strong dependence for the rest of the models, meaning a correlation between the action and the representation. In particular, for all three benchmark models the MIG$_\mu$ for $c_{3,4}$ vanishes for action $a_1$. Indeed, these two variables are not needed for $y_{a_1}$ and hence the models completely disregard these. The same is applicable for $c_1$ and action $a_2$. When considering MIG$_z$ (\cref{fig:app_vae_migs}c,d), the variances enter into play, and we see now that also the MIG for VAIR follows the same pattern, as they are mixed latent neurons and are noised out when not required for the given action. We also see an expected decrease in the MIG due to the noise. 

Another important takeaway from this is the fact that, when considering the effect of actions in \cref{fig:app_vae_migs}b,d), the TC-VAE$_{D_a}$ has a higher average MIG for all hidden variables but $c_2$. This is to be expected, given the results of \cref{thm:disentanglement}. Indeed, one could train VAIR in combination with the TC-loss to further improve the overall disentanglement. However, as presented in \cref{sec:classical}, in some cases there exist natural, interpretable representations that emerge in VAIR (i.e., $z_1=m$ and $z_2=q/m$) that go against the objective of the TC-loss.

\begin{figure}
\centering
    \includegraphics[width=0.8\textwidth]{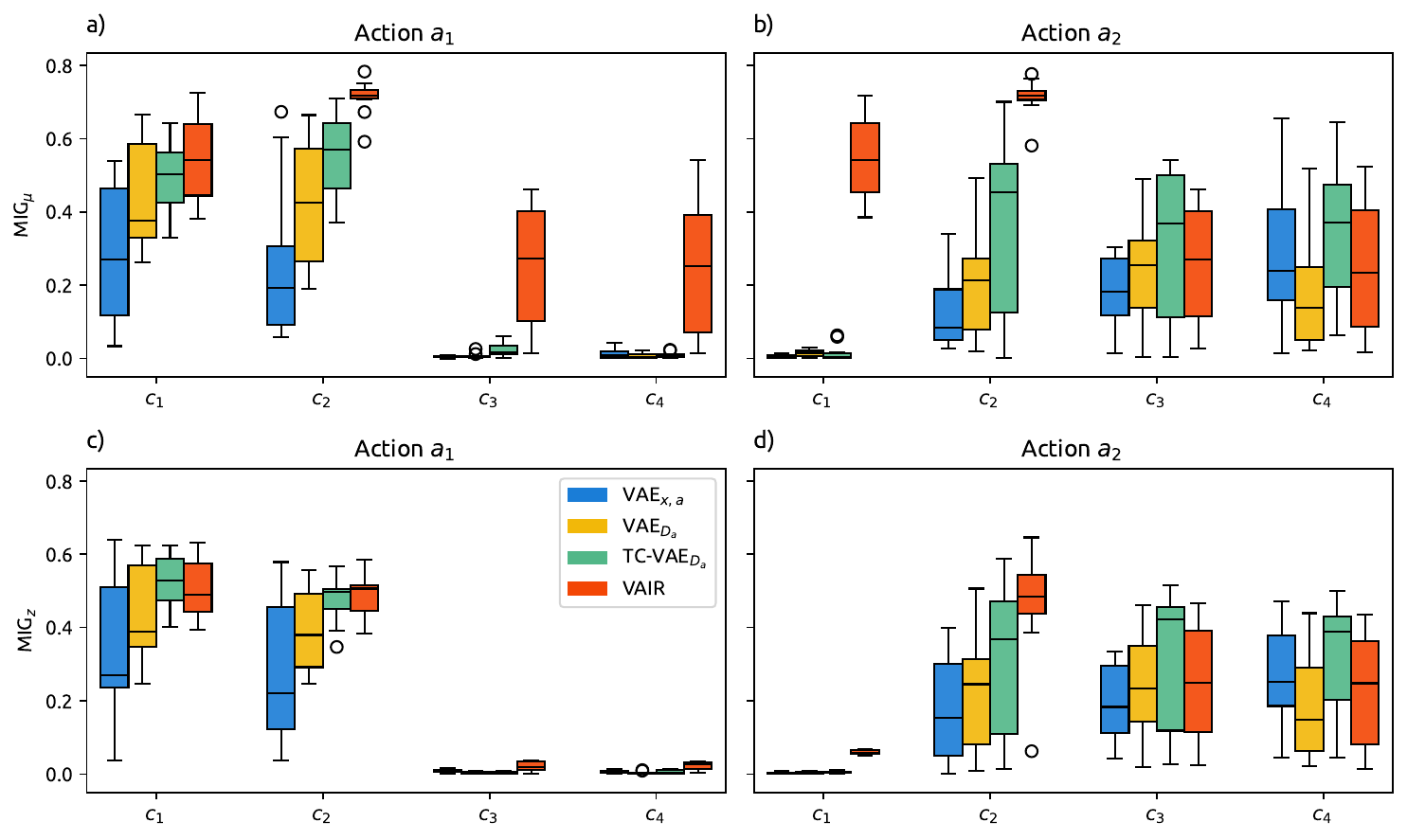}
    \caption{\textbf{Schematic representation of VAE benchmark architectures} a) TC-VAE$_x$, having as input and output the observation $x$. b) VAE$_{x,a}$ having as input the observation $x$ and $a$, and predicting the output $y$. Moreover, we further consider two variants, VAE$_{D_a}$ and TC-VAE$_{D_a}$, where the action is also input to the decoder. Boxplots show results from 20 independent training runs per model, each with random dataset generation and weight's initialization.
    } 
    \label{fig:app_vae_migs}
\end{figure}

\end{document}